%% file: main.tex
\documentclass[runningheads]{llncs}
\usepackage{eccv}
\usepackage[normalem]{ulem}
\useunder{\uline}{\ul}{}

 \newcommand{\sj}[1]{{\color{red}{[SJ:#1]}}}
\usepackage{eccvabbrv}
\usepackage{graphicx}
\usepackage{booktabs}
\usepackage[accsupp]{axessibility} 
\usepackage{multirow}

\usepackage{hyperref}

\usepackage{orcidlink}

\usepackage{booktabs}  
\usepackage{multirow}  
\usepackage{graphicx}  

\usepackage[table,xcdraw]{xcolor}
\usepackage[normalem]{ulem}

\usepackage{float}
\usepackage{algorithm}
\usepackage{algorithmic}
\definecolor{codegreen}{rgb}{0.25, 0.5, 0.35}    
\begin{document}

\makeatletter
\newcommand\whline{\noalign{\ifnum0=`}\fi\hrule \@height 1.25pt \futurelet
	\reserved@a\@xhline}

\title{Progression as Latent Drift: Generative Forecasting of Slow-Evolving Pathologies \\[-0.3cm]}

\titlerunning{Progression as Latent Drift}

\author{Yuxiang Feng\inst{1,3}$^{\star\ddagger}$ \and
Juncheng Wang\inst{2}$^{\star}$ \and
Chao Xu\inst{3,4} \and
Wenlong Hou\inst{2} \\
Huihan Wang\inst{2} \and
Yijie Qian\inst{1,3} \and
Yang Liu\inst{3,4} \and
Baigui Sun\inst{3,4} \\
Yong Liu\inst{1}$^{\dagger}$ \and
Shujun Wang\inst{2}$^{\dagger}$ \\[-0.2cm]}

\authorrunning{Y.~Feng et al.}

\institute{Zhejiang University, Hangzhou, China \and
The Hong Kong Polytechnic University, Hong Kong, China \and
IROOTECH TECHNOLOGY, China \and
Wolf 1069 b Lab, Sany Group, China \\[0.2em]
\textnormal{$^{\star}$~Equal contribution.\quad
$^{\dagger}$~Corresponding authors.\\[0.3em]
\{fengyx@zju.edu.cn, wjc2830@gmail.com\}$^{\star}$,\quad
\{yongliu@iipc.zju.edu.cn, shu-jun.wang@polyu.edu.hk\}$^{\dagger}$}}

\maketitle
\begingroup\renewcommand\thefootnote{}\footnotetext{$^{\ddagger}$~This work was conducted in collaboration with IROOTECH TECHNOLOGY.}\addtocounter{footnote}{-1}\endgroup
\vspace{-0.8cm}

\begin{abstract}
Forecasting the future anatomy of slow-evolving neurodegenerative diseases could enable earlier, more targeted intervention and improve clinical trial design, but it remains challenging because true progression signals are subtle in longitudinal MRI. In this low-signal regime, transferring modern generative sequence models directly is unreliable: training is dominated by stable baseline anatomy and confounded by dense, sample-specific nuisance variation. We first provide a theoretical analysis that explains these failures through two modes. Identity collapse occurs when optimization is driven toward reproducing the current anatomy, which prevents the model from learning faint temporal change. The continuous interpolation trap arises when standard smooth networks cannot separate localized biological drift from pervasive noise, which leads to spurious changes that diffuse across the volume. To address both issues, we propose Latent Drift, a progressive generative framework that learns change in a compressed semantic representation rather than synthesizing full-resolution anatomy. This design removes pixel-level identity from the prediction target and concentrates model capacity on progression-relevant dynamics. We further apply Finite Scalar Quantization to the learned change representation, which suppresses small, high-frequency nuisance fluctuations while preserving consistent structural drift. Experiments on longitudinal 3D brain MRI show that Latent Drift improves patient-specific neuro-forecasting over diffusion and autoregressive transformer baselines across generative fidelity and clinically relevant evaluation metrics. Project page: \href{https://cutepkq.github.io/latent-drift}{https://cutepkq.github.io/latent-drift}.

  \keywords{Slow-Evolving Pathologies \and Generative Model \and Brain Simulation}
\end{abstract}

\section{Introduction}
\label{sec:intro}
\input{Sections/Introduction_v2}

\section{Preliminaries}
\input{Sections/Preliminary}

\section{The Pathologies of Decomposed Generative Forecasting}
\input{Sections/Findings}

\section{Our Approach: Progression as Latent Drift}
\input{Sections/Method}

\section{Experiments}
\input{Sections/Experiments}

\section{Related Work}
\input{Sections/Related_Work}

\section{Conclusion}
\input{Sections/Conclusion}

\section*{Acknowledgements}
This work was supported by the National Key Research and Development Program of China (Grant No. 2025YFF0511302). This work was partially supported by RGC Collaborative Research Fund (No. C5055-24G), the Start-up Fund of The Hong Kong Polytechnic University (No. P0045999), the Seed Fund of the Research Institute for Smart Ageing (No. P0050946), and Tsinghua-PolyU Joint Research Initiative Fund (No. P0056509), and PolyU UGC funding (No. P0053716).

\clearpage
\bibliographystyle{splncs04}
\bibliography{main}

% Appendix
\clearpage
\appendix
\setcounter{page}{1}

\section{Proof to Theorems}
\input{Sections/proof_draft}

\section{Experiments Details}
\input{Sections/Experiments_Appd}

\section{Detailed Ablation Results}
\input{Sections/Ablation_Appd}

\section{More Gradient Visualization}
\input{Sections/Vis_Appd}

\end{document}

%% file: Sections/Introduction_v2.tex
Dynamic simulation of organ evolution~\cite{schmidt2007evolution,distefano2015dynamic,caldwell2003dynamic} is essential for forecasting disease trajectories, particularly for neurodegeneration and other slow-evolving brain pathologies that progress along largely irreversible courses. Because neuronal loss and downstream structural changes (e.g., cortical thinning and ventricular enlargement) cannot be recovered~\cite{Blinkouskaya2021BrainAM, Apostolova2012HippocampalAA}, substantial tissue damage may already be present by the time symptoms become clinically apparent~\cite{Bateman2012ClinicalAB, Dubois2016PreclinicalAD}. This delayed observability narrows the effective window for intervention and complicates the development and evaluation of disease-modifying therapies, which increasingly depend on identifying who will deteriorate, where, and how fast~\cite{Nakashima2025TherapeuticTW, Zhang2024DiseasemodifyingTF, Assuno2022MeaningfulBA}. 

\input{Figures/TexFig1}
Generative forecasting predicts a patient’s future anatomical state and simulates the personalized morphological trajectory of disease. Localized anticipation of progression can support earlier, more targeted care for conditions such as Alzheimer’s and Parkinson’s disease, where structural decline unfolds over years~\cite{Rohrer2012StructuralBI, Pereira2020LongitudinalDO}. It can also enrich clinical trials by prioritizing participants likely to show measurable progression, and reduce caregiving burden through more proactive management before irreversible damage accumulates~\cite{deVugt2013TheIO}.

\noindent
\begin{minipage}[c]{0.40\linewidth}
    \ \ Recent generative sequence models synthesize audio~\cite{MelQCD,haji2026taming,wang2025language,wang2026guided} and visual~\cite{cui2026lol,km2026phyeduvideo,brokman2026training,wan2025wan,ma2024follow,ma2025controllable} content well, which makes them a natural starting point for forecasting future brain anatomy~\cite{Rombach2021HighResolutionIS, Croitoru2022DiffusionMI, Kazerouni2022DiffusionMI, Peng2022GeneratingRB}, as shown by Fig.~\ref{fig:1}-(a).
\end{minipage}
\hfill
\begin{minipage}[c]{0.58\linewidth}
  \raggedright
  \vspace{0.8\baselineskip}
  \includegraphics[width=0.82\linewidth]{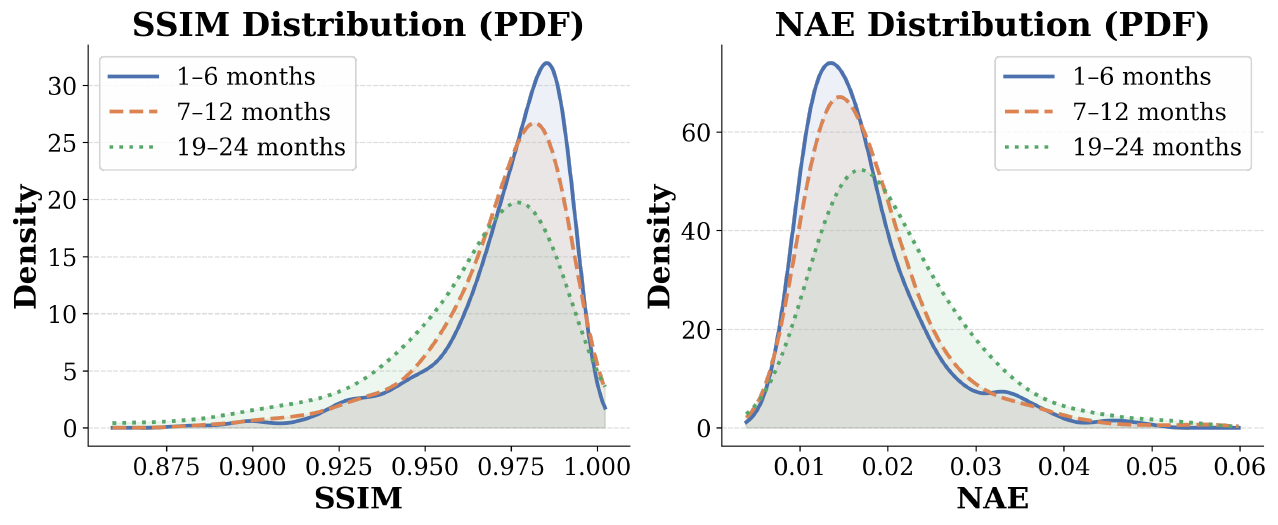}
    \captionof{figure}{Statistic similarity between current and future states.}
    \label{fig:current_future_similarity}
\end{minipage}%
\hfill

However, as shown in Fig.~\ref{fig:current_future_similarity}, forecasting neurodegeneration is different because current and future MRI states are statistically highly similar, making the temporal progression signal far weaker. Unlike natural videos with salient motion, neurodegeneration unfolds over long time scales~\cite{Henneman2009HippocampalAR, Leung2013CerebralAI}: across a typical one-year interval, disease-related morphological change may be under 1\% of the total volumetric variance in a 3D MRI scan~\cite{Narayanan2020BrainVL}. In this low-signal regime, naive transfer is unreliable, as training is dominated by static anatomy or by nuisance variation resembling scanner noise rather than the subtle biological progression of interest~\cite{Farki2025ForecastingFA, Wittens2021InterAI}.

To clarify this mismatch, our first contribution is a mathematical analysis showing that directly applying state-of-the-art generative sequence models to neurodegenerative forecasting is structurally prone to failure. The analysis yields two characteristic failure modes that expose intrinsic limitations of existing approaches. First, \textbf{identity collapse} occurs when optimization is dominated by the much larger baseline anatomy, pushing the model toward an approximate identity mapping that reproduces static neuroanatomy instead of learning faint temporal drift, as shown by Fig.~\ref{fig:1}-(b). Second, \textbf{the continuous interpolation trap} arises under dense, sample-specific nuisance variation. For Lipschitz-continuous networks, sparse semantic progression cannot be separated cleanly from pervasive noise. The learned predictor then behaves as a spurious interpolator that diffuses variance across the full volume instead of recovering localized neurodegenerative change~\cite{lipschiSca, lipschiFaz}. Therefore, \textit{How can we constrain a generative model to reliably predict subtle, localized atrophy trajectories, instead of memorizing baseline anatomy or amplifying scanner-dependent noise?}

In this paper, we address these two failure modes with a progressive generative framework designed to separate the disease-relevant progression signal from both dominant background anatomy and nuisance variability. \underline{First}, to mitigate the dominance of baseline anatomy and prevent identity collapse, we project each MRI scan into a compressed semantic latent space and train an autoregressive model to predict \textbf{the latent drift} between the current and future representations, rather than \textbf{reconstructing the entire future image}. By removing pixel-level identity as the modeling target, the model is encouraged to allocate its capacity to learning the temporal change that reflects disease progression, illustrated by Fig.~\ref{fig:1}-(c). \underline{Second}, to reduce the influence of nuisance variation and avoid spurious interpolation, we apply Finite Scalar Quantization (Fig.~\ref{fig:1}-(d)) to the latent change representation. This step acts as a \textbf{topological dead-zone filter}: small, high-frequency variations are mapped to zero, while consistent structural changes associated with disease progression stay above the quantization threshold and are preserved for forecasting.

We evaluate our Latent Drift framework on longitudinal 3D brain MRIs. By bypassing Identity Collapse and filtering nuisance noise, our approach improves neuro-forecasting over prior diffusion and autoregressive baselines. Our main contributions are as follows.
\pagebreak[4]

\begin{itemize}
\item \textbf{Task formalization and failure analysis:} We cast forecasting of slow-evolving pathologies as conditional generative synthesis and identify two fundamental optimization failures: \textbf{identity collapse} from dominant stationary anatomy and the \textbf{continuous interpolation trap} from dense nuisance variability.
\item \textbf{Latent Drift framework:} We introduce a progressive generative architecture that predicts progression in a compressed semantic space, disentangling biological change from baseline anatomy and amplifying the sparse pathological signal.
\item \textbf{FSQ mechanism:} We analyze Finite Scalar Quantization as a \textbf{topological dead-zone filter} that suppresses high-frequency nuisance noise while preserving structurally consistent drift, mitigating the limitations of smooth predictors.
\item \textbf{Empirical performance:} We achieve state-of-the-art patient-specific 3D MRI forecasting, outperforming diffusion and autoregressive transformer baselines on both generative fidelity and clinically relevant metrics.
\end{itemize}

%% file: Figures/TexFig1.tex
\begin{figure}[t]
    \centering
    \includegraphics[width=\linewidth]{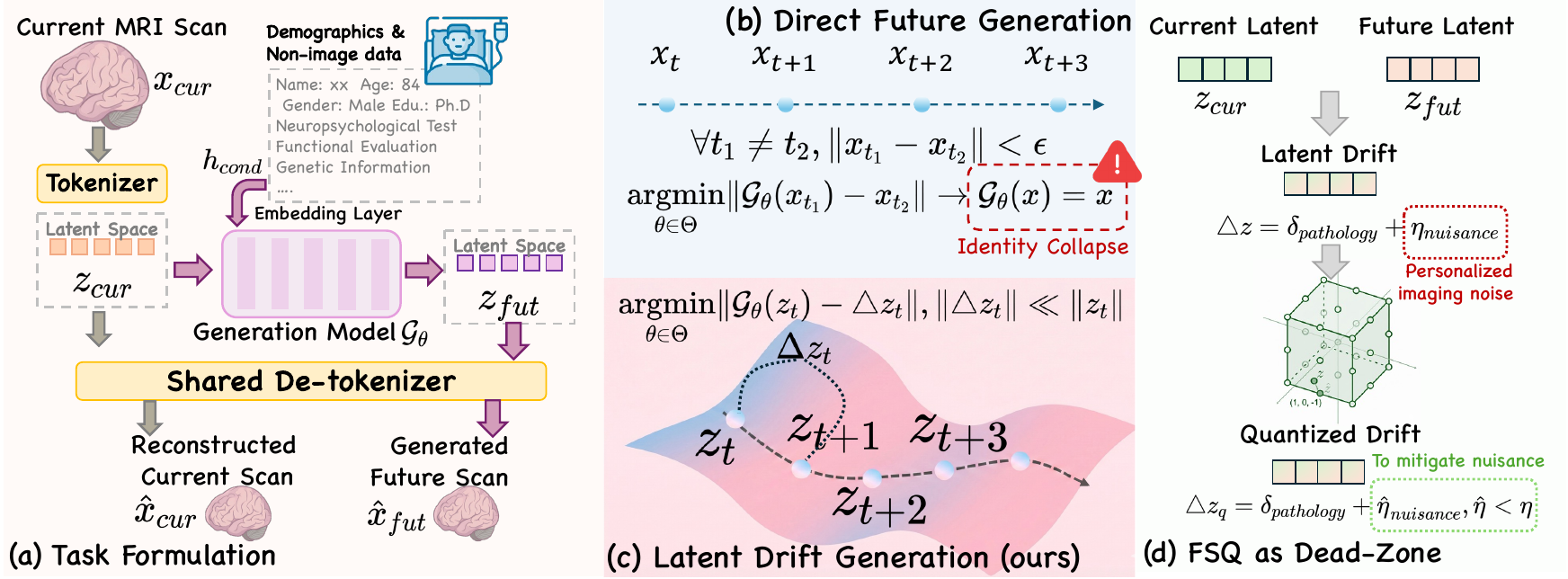}
    \caption[Progression as Latent Drift]{
        \textbf{Progression as Latent Drift.} 
        \textbf{(a) Task Formulation:} Autoregressive forecasting of future MRI states in a compressed semantic latent space. 
        \textbf{(b) Direct Future Generation:} Predicting the absolute future state induces \textit{Identity Collapse}, as stationary background anatomy overwhelms microscopic biological changes. 
        \textbf{(c) Latent Drift Generation (Ours):} Predicting the temporal residual (\texorpdfstring{$\Delta z$}{Delta z}) bypasses this collapse, isolating the true pathological trajectory. 
        \textbf{(d) FSQ as a Topological Dead-Zone:} To break the \textit{Continuous Interpolation Trap} caused by dense imaging noise, Finite Scalar Quantization (FSQ) serves as a non-Lipschitz filter that annihilates nuisance variations while preserving sparse semantic drift.
    }
    \label{fig:1}
\end{figure}

%% file: Sections/Preliminary.tex
\subsection{Problem Formulation}
Let $\mathcal{D} = \{(x_{cur}^{(i)}, x_{fut}^{(i)}, c^{(i)}, \Delta t^{(i)})\}_{i=1}^N$ be a dataset of longitudinal 3D brain MRIs. Given a baseline scan $x_{cur} \in \mathbb{R}^{H \times W \times D}$, patient clinical metadata $c$, and a time horizon $\Delta t$, the objective is to learn a conditional generative model $p_\theta(x_{fut} \mid x_{cur}, c, \Delta t)$ to forecast the future anatomical state $x_{fut}$.

\subsection{Baseline: Latent Generative Forecasting}
Due to the computational intractability of modeling high-resolution 3D volumes directly in pixel space, the standard baseline paradigm employs a three-stage latent generative framework.
\begin{itemize}
    \item \textbf{Spatial Compression}: A pre-trained encoder $\mathcal{E}$ maps the input volumes into a compact, lower-dimensional latent space: $z_{cur} = \mathcal{E}(x_{cur})$ and $z_{fut} = \mathcal{E}(x_{fut})$.
    \item \textbf{Latent Generation}: The clinical variables and time horizon are projected into dense conditioning embeddings, $h_{cond} = \text{MLP}(c, \Delta t)$. A sequence model $\mathcal{G}_\theta$ (e.g., an autoregressive transformer) is then trained to directly synthesize the future latent state:
    \begin{equation}
        \hat{z}_{fut} = \mathcal{G}_\theta(z_{cur}, h_{cond}).
    \end{equation}
    \item \textbf{Reconstruction}: The forecasted MRI is decoded back to the pixel domain via a pre-trained spatial decoder $\mathcal{D}$, yielding $\hat{x}_{fut} = \mathcal{D}(\hat{z}_{fut})$.
\end{itemize}

%% file: Sections/Findings.tex
Having established the standard latent generative baseline, we analyze the optimization dynamics of forecasting slow-evolving temporal systems. To understand why standard architectures fail on Alzheimer's progression, we decompose the future latent state into three signals:
\begin{equation}
    z_{fut} = z_{cur} + \delta_{pathology} + \eta_{nuisance},
\end{equation}
where $z_{cur}$ is the dominant stationary anatomy, $\delta_{pathology}$ is the sparse biological progression, and $\eta_{nuisance}$ is the dense, sample-personalized imaging noise. In this section, we show how the properties of these terms induce optimization pathologies that make standard continuous generation insufficient.

\subsection{Source Dominance and the Identity Trap}
Our first hypothesis addresses the relative magnitudes of the decomposed terms: because the structural change over a typical clinical horizon is microscopic, 

\noindent
\begin{minipage}[t]{0.49\linewidth}
\vspace{0pt}
    the magnitude of the stationary background vastly exceeds the biological signal ($||z_{cur}|| \gg ||\delta_{pathology}||$). 
    We hypothesize that when a generative sequence model $\mathcal{G}_\theta$ is optimized to directly synthesize the absolute future state ($z_{fut}$), this magnitude imbalance causes the stationary background to overwhelmingly dominate the training 
\end{minipage}
\hfill
\begin{minipage}[t]{0.49\linewidth}
\vspace{0pt}
\centering
\includegraphics[width=\linewidth]{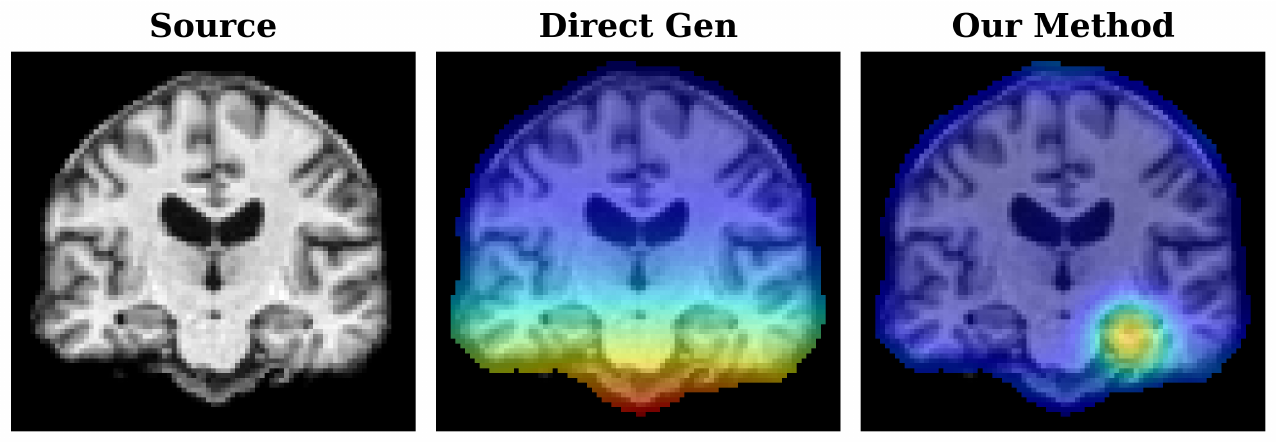}
\captionof{figure}{Visualization of gradient-based signal differences between current and predicted future states.}
\label{fig:gradient_vis}
\end{minipage}

\noindent gradients, trapping the model in an identity function ($f(x) \approx x$).

\paragraph{Empirical Validation:} To empirically validate this source dominance, we trained a baseline continuous sequence model to directly predict $z_{fut}$ conditioned on patient metadata and time horizon. To trace the optimization focus, we visualized the spatial gradients of the objective function with respect to the generated future state during training.

As shown in Fig.~\ref{fig:gradient_vis}, the gradient analysis shows a clear difference in how capacity is allocated. When generating $z_{fut}$ directly, the training gradients spread across the spatial volume, indicating that the network spent its capacity minimizing reconstruction error on static anatomy such as the skull and unaffected hemispheres, and largely ignored the localized progression. When the target was shifted to the temporal difference ($\Delta z$), the gradients localized around regions of true change such as the hippocampus. This confirms that predicting the absolute state induces Identity Collapse, and that a successful architecture must remove $z_{cur}$ from the objective to isolate the progression content. Analogous spatial-temporal decoupling has proven effective for motion modeling in natural video~\cite{ma2025follow}, though our setting demands isolating a far sparser biological signal.

\subsection{The Continuous Interpolation Trap}
Shifting the predictive target to the temporal residual ($\Delta z = \delta_{pathology} + \eta_{nuisance}$) bypasses Identity Collapse, but it exposes the model to a second pathology. The dense, sample-personalized nature of $\eta_{nuisance}$ perturbs the convergence of continuous neural networks and prevents recovery of the true biological trajectory.

We prove this using statistical learning theory. The biological drift $\delta$ is anatomically sparse, supported on a localized, low-dimensional manifold. The imaging noise $\eta$ is a high-frequency perturbation distributed across the entire latent dimension. When a continuous sequence model fits this composite signal, it behaves as a spurious interpolator.

We formalize this failure mode in the following theorem, demonstrating that continuous, Lipschitz-bounded models are mathematically incapable of isolating sparse semantic drift when corrupted by dense, sample-personalized noise.

\begin{theorem}[Continuous Interpolation Trap(Informal, see Appendix for formal version)]
\label{thm:trap}
Let $\Delta z(x)=\delta(x)+\eta(x)$, where $\delta(x)\in\mathbb R^d$ is sparse 
($\|\delta(x)\|_0 \le s \ll d$) and $\eta(x)$ is dense, zero-mean nuisance noise 
with $\|\eta(x)\|_\infty \le \varepsilon$ and non-zero entries in all coordinates.

Let $f_\theta$ be an $L$-Lipschitz model trained by ERM to fit $\Delta z$.

Then, even if $f_\theta$ interpolates the training data,

\begin{equation}
    \mathbb E\|f_\theta(x)-\delta(x)\|_2^2
\;\gtrsim\;
\mathbb E\|\eta(x)\|_2^2,
\end{equation}

and for unseen $x$, $f_\theta(x)$ is generically dense.  

Hence, no Lipschitz-continuous predictor trained on $\Delta z$
can recover the sparse support of $\delta$.
\end{theorem}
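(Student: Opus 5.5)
The plan is to pin down a formal setting and then argue in two parts: an $L^2$ lower bound from the independence of the noise, and a genericity argument for density off the training set. For the formal setting, let $x$ range over a compact domain and $\delta$ be deterministic given $x$. Take $\eta(x)=\xi$ to be sample-personalized: independent of the semantic content that determines $\delta$, zero mean, with $\mathbb E\xi_j^2=\sigma_j^2>0$ for every coordinate $j$. ERM is run on a finite sample $\{x_i\}_{i=1}^n$. An interpolating $f_\theta$ then satisfies $f_\theta(x_i)=\delta(x_i)+\eta_i$ on the training points.

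For the lower bound I would split
\[
\mathbb E\|f_\theta(x)-\delta(x)\|_2^2 = \mathbb E\|f_\theta(x)-\Delta z(x)\|_2^2 + 2\,\mathbb E\langle f_\theta(x)-\Delta z(x),\eta(x)\rangle - \mathbb E\|\eta(x)\|_2^2 ,
\]
and treat the training and test regimes separately.

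\textbf{Training points.} Interpolation kills the first two terms, since $f_\theta(x_i)-\delta(x_i)=\eta_i$ exactly. The error therefore equals the noise energy $\sum_j\sigma_j^2=\mathbb E\|\eta\|_2^2$. The fit has memorized the nuisance rather than isolated $\delta$.

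\textbf{Test points.} For a fresh $x$ whose noise is independent of the training sample, I would use Lipschitzness to transport the memorized noise. For $x$ in a ball of radius $r$ around the nearest training point $x_i$,
\[
\|f_\theta(x)-\delta(x)\| \ge \|\eta_i\| - Lr - \|\delta(x)-\delta(x_i)\| .
\]
With $r\lesssim\varepsilon/L$ and $\delta$ regular, this transfers a constant fraction of $\|\eta_i\|$ to a neighbourhood of positive measure. Integrating over these neighbourhoods gives $\mathbb E\|f_\theta-\delta\|^2\ge c\,\mathbb E\|\eta\|^2$, which is the $\gtrsim$ in the statement.

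There is a complementary route: $f_\theta(x)$ is independent of the test noise, so $\mathbb E\|f_\theta-\Delta z\|^2\ge\mathbb E\|\eta\|^2$. This is the irreducible risk, so the prediction cannot beat the noise floor in predicting $\Delta z$.

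For density I would argue by genericity. The coordinates $f_{\theta,j}$ for $j\notin\operatorname{supp}\delta(x)$ are continuous functions that equal $\eta_{i,j}\neq0$ at the training points. The zero set $\{x: f_{\theta,j}(x)=0\}$ is therefore a proper closed set. I expect that for generic noise realizations (or generic $\theta$, with $f_\theta$ real-analytic) it has measure zero, and a union bound over the $d-s$ off-support coordinates then gives $\|f_\theta(x)\|_0>s$ almost everywhere. That would establish that $f_\theta$ cannot recover the sparse support of $\delta$.

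The main obstacle is the test-time bound. A Lipschitz function is not forced to carry noise away from the training points: it could decay back to $\delta$ at rate $L$. The claim therefore needs an explicit dependence on sample density versus $\varepsilon/L$, which is why the informal statement hides constants in $\gtrsim$.

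I would first try to assume the covering radius of the training set is $\le \varepsilon/(2L)$, i.e. the dense-sample or memorization regime. If that turns out to be too strong, I would fall back on the irreducible-risk formulation (error against $\Delta z$), which holds unconditionally. The density claim would then be stated under the analytic-genericity assumption.
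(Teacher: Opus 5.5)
Your test-point inequality $\|f_\theta(x)-\delta(x)\|_2\ge\|\eta_{\pi(x)}\|_2-(L+L_\delta)\rho(x)$ is exactly the paper's Part I. The paper stops at its unconditional consequence $\mathcal R_\delta(f_\theta)\ge\mathbb E\|\eta\|_2-(L+L_\delta)\mathbb E[\rho(x)]$. The gaps are in how you then try to reach $\gtrsim\mathbb E\|\eta\|_2^2$.

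First, your covering assumption $\sup_x\rho(x)\le\varepsilon/(2L)$ contradicts the interpolation hypothesis when the noise is dense. An $L$-Lipschitz interpolant forces $\|x_i-x_k\|_2\ge\|\eta_i-\eta_k\|_2/(L+L_\delta)$ for every pair. On a convex domain, the midpoint of the closest pair has $\rho=\frac12\min_{i\ne k}\|x_i-x_k\|_2$. So your assumption forces $\|\eta_i-\eta_k\|_2\le(1+L_\delta/L)\varepsilon$ for that pair. But for dense noise with coordinates of size comparable to $\varepsilon$, one has $\mathbb E\|\eta_i-\eta_k\|_2^2=2\,\mathbb E\|\eta\|_2^2\asymp d\varepsilon^2$. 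The closest pair is fixed by the inputs alone, so its noise difference is a typical one.

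The $\ell_2$ transfer actually lives at scale $\|\eta\|_2/(L+L_\delta)$, not $\varepsilon/L$. Even at that scale, interpolation keeps $\sup_x\rho(x)$ of the same order, so the density term cannot simply be assumed away. The paper's remark that $\mathbb E[\rho]\to0$ as $n\to\infty$ at fixed $L$ runs into the same tension.

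Second, your fallback does not prove the statement. Correct the sign in your decomposition: the last term is $+\mathbb E\|\eta\|_2^2$. Independence of the test noise then gives $\mathbb E\|f_\theta-\delta\|_2^2=\mathbb E\|f_\theta-\Delta z\|_2^2-\mathbb E\|\eta\|_2^2$. So the irreducible-risk inequality is equivalent to $\mathbb E\|f_\theta-\delta\|_2^2\ge 0$, and $f=\delta$ attains it.

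For density, your analytic-genericity argument replaces the hypothesis rather than using it. Nothing in the statement gives real-analyticity in $x$, and assuming it would exclude ReLU networks. For a general member of $\mathcal F_L$ the conclusion is false. Take pairwise disjoint balls $B(x_i,r)$ and set $f=\delta+\sum_i(1-\|x-x_i\|_2/r)_+\,\eta_i$. This map interpolates the data, is $(L_\delta+\max_i\|\eta_i\|_2/r)$-Lipschitz, and coincides with the $s$-sparse $\delta$ outside $\bigcup_iB(x_i,r)$. It makes your own ``decay back'' remark concrete, and it shows the mean-square bound also needs a sample-density condition.

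What Lipschitzness does give is the paper's local, quantitative Part II. Take $j\notin S(x)$ and the nearest training point $x_i$ with $\delta_j(x_i)=0$. Then $|f_{\theta,j}(x)|\ge|\eta_{i,j}|-L\rho(x)$, which is positive whenever $\rho(x)<|\eta_{i,j}|/L$. So $f_\theta$ is dense on a neighbourhood of each training point. By the example above, no global statement is available without extra structure. If you add analyticity, your route yields a stronger almost-everywhere density result, but that is a different theorem from the one stated.
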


\paragraph{Implications for Forecasting:} Theorem 1 establishes a mathematical limit on continuous generative forecasting. Because modern neural networks are Lipschitz-continuous, minimizing expected error over the dense $\eta_{nuisance}$ distribution forces the model to spread small, continuous variations across the entire predicted latent volume. The resulting generation smears spurious variance across the volume, which compromises the structural fidelity of the predicted Alzheimer's trajectory.

\textbf{In summary}, the empirical trap of $z_{cur}$ and the theoretical trap of $\eta_{nuisance}$ together impose an architectural requirement: \emph{the generative framework must target the temporal drift to escape Identity Collapse, and it must have a non-Lipschitz, non-linear thresholding property to break the continuous interpolation trap.}

%% file: Sections/Method.tex
\input{Figures/TexFigMain}
Our pathology analysis imposes two constraints at once: the model must escape source dominance while also breaking the continuous interpolation trap. To meet both, we introduce a two-stage generative framework, Progression Latent Drift, that separates the semantic biological trajectory from the stationary background and the dense imaging noise. The overall pipeline is shown in Fig.~\ref{fig:framework}.

\textbf{Architecture overview.} The framework combines a residual tokenization stage with an autoregressive sequence model. Rather than mapping $x_{cur} \rightarrow x_{fut}$ directly, we train a Latent Drift Tokenizer to isolate and discretize the temporal difference between scans. We then train a Decoder-Transformer to forecast this discrete progression signal, conditioned on the patient's baseline state and clinical metadata.

\subsection{The Latent Drift Tokenizer (Escaping Identity Collapse)}

To address the Identity Collapse caused by the large spatial overlap of $x_{cur}$, we use an architectural bottleneck that prevents the network from reconstructing the stationary anatomy from the temporal target.

Given a longitudinal pair $(x_{cur}, x_{fut})$, a tokenizer $\mathcal{E}$ first compresses the volumes into dense continuous representations: $z_{cur} = \mathcal{E}(x_{cur})$ and $z_{fut} = \mathcal{E}(x_{fut})$. Instead of quantizing these absolute states independently, we compute the continuous latent drift:
\begin{equation}
    \Delta z_{raw} = z_{fut} - z_{cur}.
\end{equation}

To sanitize this signal of dense, sample-personalized noise ($\eta_{nuisance}$) and enable discrete sequence modeling, we project the continuous drift through a Finite Scalar Quantization (FSQ) bottleneck. We define a uniform scalar grid bounded by $L_{max}$ with a step size $s$. Utilizing the stop-gradient operator, $\text{sg}(\cdot)$, we formulate the complete FSQ forward pass and straight-through estimator (STE) backward pass as a single operation:
\begin{equation}
    \Delta z_q = \Delta z_{raw} + \text{sg}\left( s \cdot \text{clip}\left( \lfloor \frac{\Delta z_{raw}}{s} \rceil, -L_{max}, L_{max} \right) - \Delta z_{raw} \right).
\end{equation}
The rounding operation $\lfloor \cdot \rceil$ acts as a topological dead-zone: any continuous noise fluctuation where $|\Delta z_{raw}| < s/2$ is mapped exactly to zero.

Finally, to reconstruct the future state, the quantized drift is added back to the baseline continuous latent and passed through the de-tokenizer $\mathcal{D}$:
\begin{equation}
    \hat{x}_{fut} = \mathcal{D}(z_{cur} + \Delta z_q).
\end{equation}

This residual objective forces the autoencoder to allocate its discrete codebook capacity to the semantic structural changes ($\Delta z_q$), keeping the target separate from the stationary background.

\subsection{Autoregressive Drift Generation}
With the Latent Drift Tokenizer trained, we extract the discrete drift tokens $\Delta z_q$ for all longitudinal pairs in the training corpus. The forecasting task then becomes a discrete sequence modeling problem over the isolated biological progression~\cite{qian2025think}.

We use a Decoder-Transformer, $\mathcal{G}_\theta$, to autoregressively synthesize the quantized drift. The model is conditioned on the continuous baseline anatomy $z_{cur}$ and a fused embedding of the clinical metadata $c$ and time horizon $\Delta t$:
\begin{equation}
    P(\Delta \hat{z}_q \mid x_{cur}, c, \Delta t) = \mathcal{G}_\theta(\Delta \hat{z}_q \mid z_{cur}, \text{MLP}(c, \Delta t)).
\end{equation}

By optimizing the cross-entropy loss over the FSQ vocabulary, the Transformer models the probability distribution of semantic changes. During inference, the predicted drift tokens are decoded and added to the patient's baseline scan, yielding a patient-specific forecasted MRI.

\subsection{Theoretical Resolution: FSQ as a Topological Dead-Zone Filter}

Section~3 established a structural limitation: continuous Lipschitz predictors trained on
\(
\Delta z_{\mathrm{raw}} = \delta + \eta
\)
must interpolate the dense nuisance component $\eta$, thereby destroying the sparsity of the true biological drift $\delta$.

We now show that applying Finite Scalar Quantization (FSQ) to the latent residual introduces the precise non-Lipschitz behavior required to break this failure mode.

\paragraph{Dead-Zone Quantization.}

Let $\delta \in \mathbb{R}^d$ be $s$-sparse with minimum signal magnitude $\gamma > 0$, and let $\eta$ be dense nuisance noise satisfying $\|\eta\|_\infty \le \varepsilon$ with $\varepsilon < \gamma$. The separation condition $\varepsilon < \gamma$ corresponds to a minimal signal strength ($\beta$-min) assumption commonly imposed in exact support recovery analyses under noise \cite{betamin}. Without such a margin condition, sparse support is statistically non-identifiable regardless of the estimator.

Define a scalar quantizer with step size $h$:
\begin{equation}
Q_h(z)_j = h \cdot \left\lfloor \frac{z_j}{h} \right\rceil.
\end{equation}

We calibrate the quantization scale such that
\begin{equation}
h > 2\varepsilon.
\end{equation}

\begin{theorem}[FSQ as a Topological Dead-Zone Filter (Informal, see Appendix for formal version)]
Under the conditions above, quantization of the residual
\begin{equation}
    \Delta z_q = Q_h(\delta + \eta)
\end{equation}
has the following properties:

1. All nuisance-only coordinates vanish:
    \begin{equation}
        j \notin \mathrm{supp}(\delta)
    \Rightarrow
    \Delta z_{q,j} = 0.
    \end{equation}

2. The support of the biological drift is preserved:
    \begin{equation}
        \mathrm{supp}(\Delta z_q) = \mathrm{supp}(\delta).
    \end{equation}

3. The dense noise manifold collapses to a sparse discrete set prior to learning.

Consequently, the generative model no longer receives dense stochastic supervision, and the Continuous Interpolation Trap no longer applies.
\end{theorem}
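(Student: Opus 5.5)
The proof works one coordinate at a time. The quantizer $Q_h$ acts separately on each coordinate, and the support of $\Delta z_q$ is determined by which coordinates round to zero. So I would fix an index $j$ and split into two cases: $j \notin \mathrm{supp}(\delta)$ and $j \in \mathrm{supp}(\delta)$. Rounding uses the nearest-integer map $\lfloor\cdot\rceil$, and I would state a tie-breaking convention at $\pm 1/2$ up front so that the inequalities below can be taken strict.

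\textbf{Property 1 (noise-only coordinates vanish).} If $j \notin \mathrm{supp}(\delta)$, then $(\delta+\eta)_j = \eta_j$. This gives $|\eta_j/h| \le \varepsilon/h < 1/2$ because of the calibration $h > 2\varepsilon$. Hence $\lfloor \eta_j/h \rceil = 0$ and $\Delta z_{q,j}=0$. The clipping to $[-L_{max},L_{max}]$ used in the FSQ layer maps $0$ to $0$, so this case also holds for the clipped quantizer. This immediately gives $\mathrm{supp}(\Delta z_q) \subseteq \mathrm{supp}(\delta)$.

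\textbf{Property 2 (support preserved).} If $j \in \mathrm{supp}(\delta)$, the $\beta$-min assumption gives $|\delta_j| \ge \gamma$. By the reverse triangle inequality, $|\delta_j+\eta_j| \ge \gamma - \varepsilon$, and the sign of $\delta_j+\eta_j$ equals the sign of $\delta_j$. The coordinate survives rounding provided $\gamma - \varepsilon > h/2$, i.e.\ $h < 2(\gamma-\varepsilon)$. Clipping cannot destroy a nonzero integer as long as $L_{max} \ge 1$, since it only saturates the value. Combining with Property 1 yields equality of supports.

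\textbf{Main obstacle.} The stated hypotheses ($\varepsilon<\gamma$ and $h>2\varepsilon$) do not by themselves force $h < 2(\gamma-\varepsilon)$. For example, $h$ could be taken larger than $2\gamma$, in which case every coordinate is killed. The formal version must therefore calibrate $h$ inside the window
\begin{equation}
2\varepsilon < h < 2(\gamma-\varepsilon),
\end{equation}
which is nonempty exactly when $\gamma > 2\varepsilon$. I would make this the precise margin condition, a strengthened $\beta$-min assumption. I would also remark that under the weaker condition $\varepsilon<\gamma\le 2\varepsilon$ no single uniform step can separate signal from noise: an on-support value $\gamma-\varepsilon$ and an off-support value $\varepsilon$ can coincide in magnitude.

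\textbf{Property 3 and the consequence.} For any input, $\Delta z_q$ lies in
\begin{equation}
\mathcal{S} = \bigcup_{|S|\le s} \bigl\{ h k : k\in\mathbb{Z}^d,\ \mathrm{supp}(k)\subseteq S,\ \|k\|_\infty \le L_{max} \bigr\},
\end{equation}
a finite set of cardinality at most $\sum_{k\le s}\binom{d}{k}(2L_{max})^k$. This set has no element with more than $s$ nonzero entries, so the dense set $\{\delta+\eta\}$ collapses onto it.

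For the final claim, I would revisit the mechanism behind Theorem~\ref{thm:trap}. That lower bound came from the target carrying the irreducible dense variance $\mathbb{E}\|\eta\|_2^2$. Under the calibration window, Properties 1 and 2 show that $Q_h(\delta+\eta)$ is zero off $\mathrm{supp}(\delta)$. I would then add the mild condition that the grid cell of $\delta_j$ absorbs the perturbation $\eta_j$ (e.g.\ $\delta_j$ takes values on the grid $h\mathbb{Z}$); this makes $Q_h(\delta+\eta)=\delta$ a deterministic function of the signal alone. The supervision target then has zero nuisance variance. The cross-entropy-trained model is fitting a sparse discrete label, so the dense lower bound of Theorem~\ref{thm:trap} has no term to act on.
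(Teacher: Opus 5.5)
Your proof is correct and follows the paper's own argument. Both use the same coordinate-wise split: off-support entries satisfy $|\eta_j|\le\varepsilon<h/2$ and round to zero, and on-support entries satisfy $|\delta_j+\eta_j|\ge\gamma-\varepsilon$. The obstacle you flag is real and is exactly where the paper's Step~2 fails: it asserts $\gamma-\varepsilon>h/2$ from $h>2\varepsilon$ ``by choosing $h<2\gamma$ if desired,'' but this does not follow (e.g.\ $\gamma=1$, $\varepsilon=0.4$, $h=1.5$ meets both conditions, yet $\delta_j+\eta_j=0.6$ rounds to $0$), so your window $2\varepsilon<h<2(\gamma-\varepsilon)$, which requires $\gamma>2\varepsilon$, is the hypothesis support preservation actually needs; your handling of clipping, Property~3, and the closing consequence is also more explicit than the paper, which only records $\|\Delta z_q\|_0\le s$.
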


\paragraph{Intuitive Remark}

For coordinates outside the true pathology support, the residual equals pure noise with magnitude strictly below $h/2$, forcing quantization to zero.
On the true support, the signal magnitude exceeds the dead-zone threshold, ensuring non-zero discrete activation.
Since $Q_h$ is discontinuous at the bin boundaries, it is non-Lipschitz, which breaks the smooth interpolation behavior of standard neural predictors.

\paragraph{Implication.}

FSQ is more than a compression mechanism. Applied to the latent drift, it acts as a structural projection that eliminates dense nuisance noise, restores sparse support geometry, and converts continuous regression into discrete event modeling. By applying this topological dead-zone before autoregressive modeling, Quantized Latent Drift addresses the Continuous Interpolation Trap and supports structurally faithful forecasting.

%% file: Figures/TexFigMain.tex
\begin{figure}[t]
    \centering
    \includegraphics[width=\linewidth]{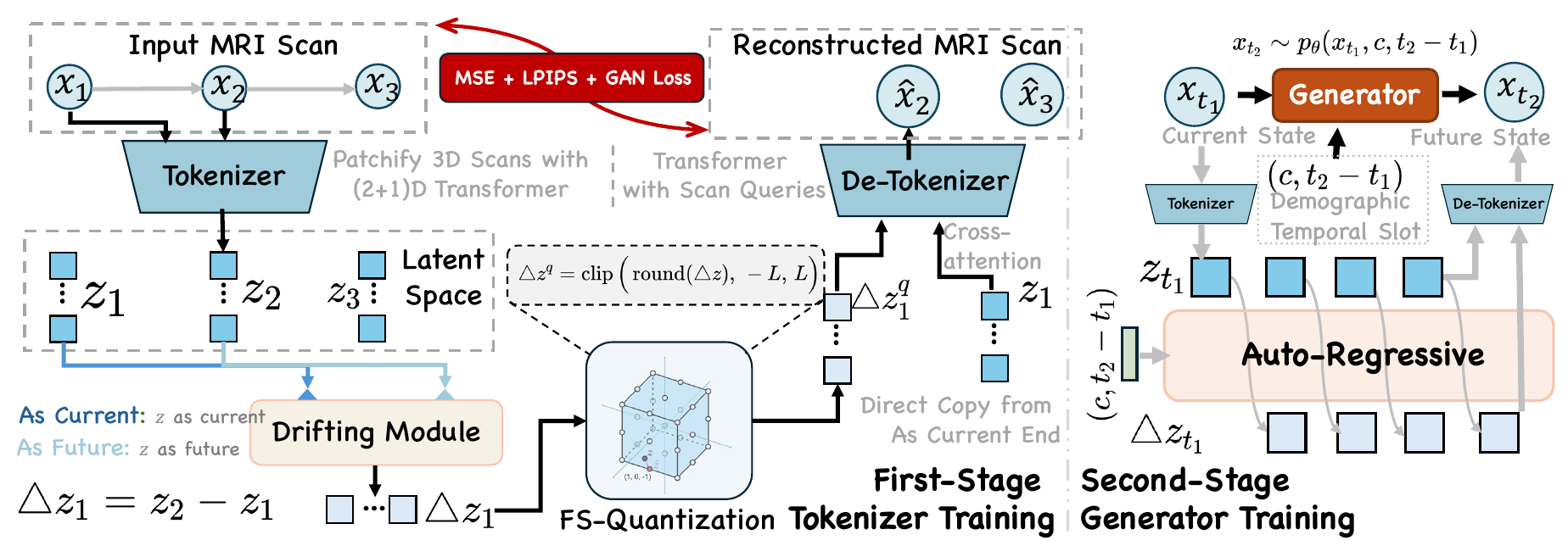}
    \caption{
        \textbf{The Progression Latent Drift Framework.} Our two-stage architecture explicitly disentangles sparse biological progression from stationary anatomy and dense imaging noise. 
        \textbf{First-Stage (Latent Drift Tokenizer):} Longitudinal MRI scans are patchified and compressed into continuous latent representations ($z$). To escape Identity Collapse, a Drifting Module computes the temporal residual ($\Delta z_1 = z_2 - z_1$). Finite Scalar Quantization (FSQ) discretizes this continuous drift, acting as a topological dead-zone filter to strictly suppress non-semantic nuisance noise. The future scan is then reconstructed by fusing the quantized drift ($\Delta z^q_1$) with the baseline latent ($z_1$) via cross-attention. 
        \textbf{Second-Stage (Generator Training):} With the biological trajectory successfully isolated, an autoregressive sequence model learns to forecast the discrete drift tokens, conditionally guided by the baseline anatomy ($z_{t_1}$), patient demographics, and the target temporal horizon ($t_2 - t_1$).
    }
    \label{fig:framework}
\end{figure}

%% file: Sections/Experiments.tex
\subsection{Evaluation Setup}
\noindent \textbf{Benchmarks.} We utilize data from the ADNI \cite{jack2008alzhe} and AIBL \cite{ellis2009australian} datasets. During preprocessing, all 3D MRI volumes are downsampled to a resolution of 93 × 112 × 96. By leveraging the patients' longitudinal follow-up information, we construct a total of 3,981 current-future scan pairs. From this cohort, 1,140 pairs are allocated to the testing set.

\noindent \textbf{Evaluation metrics.} We evaluate the generated 3D MRIs across three dimensions: Structural Similarity, which assesses spatial agreement between generated and ground truth volumes via SSIM and NCC metrics; Downstream Clinical Utility, which measures the diagnostic value of synthesized images using a pre-trained Alzheimer's Disease classification model; and Generative Fidelity, which evaluates distribution-level realism using Fréchet Inception Distance (FID) with features extracted by I3D and our domain-specific classifier (see Appendix for details).

\noindent \textbf{Clinical evaluation protocol.} For Downstream Clinical Utility, we use a frozen AD classifier (a ViViT-style model with over 91\% accuracy on real scans) that is run on each generated future scan, and the predicted label is compared against the patient's real ground-truth diagnosis at the matching time. The classifier and forecasting cohorts are patient-disjoint, so no patient seen during classifier training appears in the forecasting test set, and we draw a single forecast per (patient, target time) without multi-sample averaging or ensembling.

\subsection{Main Results}
\input{Tables/Tab_Main}
The results in Table~\ref{tab:main} support our analysis of Identity Collapse and the Continuous Interpolation Trap. Latent Drift achieves the best \textbf{Diff-SSIM (0.8204)} and \textbf{NCC (0.9880)}, which indicates that targeting the latent temporal residual $\Delta z$ helps the model avoid being dominated by stationary anatomy. CycleGAN \cite{zhu2017unpaired} achieves higher generative fidelity in terms of \textbf{FID (12.13 and 7.03)}, but its lower \textbf{Diff-SSIM (0.7713)} suggests that its optimization mainly reproduces static neuroanatomy rather than capturing the sparse biological drift needed for forecasting. Wilcoxon signed-rank tests over patient-level Accuracy, $F_1$, and Diff-SSIM confirm that these gains are statistically significant, with our method outperforming every baseline at $p<0.05$.

The clinical metrics point to the role of the Finite Scalar Quantization (FSQ) step. Acting as a topological dead-zone filter, FSQ suppresses the dense nuisance noise that drives continuous models toward spurious variance, which helps keep the predicted trajectory structurally consistent. Our method leads on \textbf{$F_1$ (87.51)} and \textbf{accuracy (88.33)} over diffusion-based baselines such as Palette~\cite{saharia2022palette} and I2I-DiT~\cite{peebles2023scalable}. Overall, separating biological progression from baseline anatomy through a quantized residual is an effective strategy for patient-specific neuro-forecasting.

\noindent 
\begin{minipage}[t]{0.48\linewidth} 
\vspace{0pt} 
    
\paragraph{Longitudinal Trajectory Fidelity}
Figure \ref{fig:trajectory} shows the longitudinal trajectory analysis, which tests whether the framework can simulate patient-specific neurodegeneration over long clinical intervals. We track the structural similarity between the baseline scan and the forecasted states over a four-year period, and find that our approach follows the rate of anatomical decline with good accuracy and tracks the ground-truth progression closely.
\end{minipage}
\hfill 
\begin{minipage}[t]{0.5\linewidth}
\vspace{0pt}

\centering
\includegraphics[width=\linewidth]{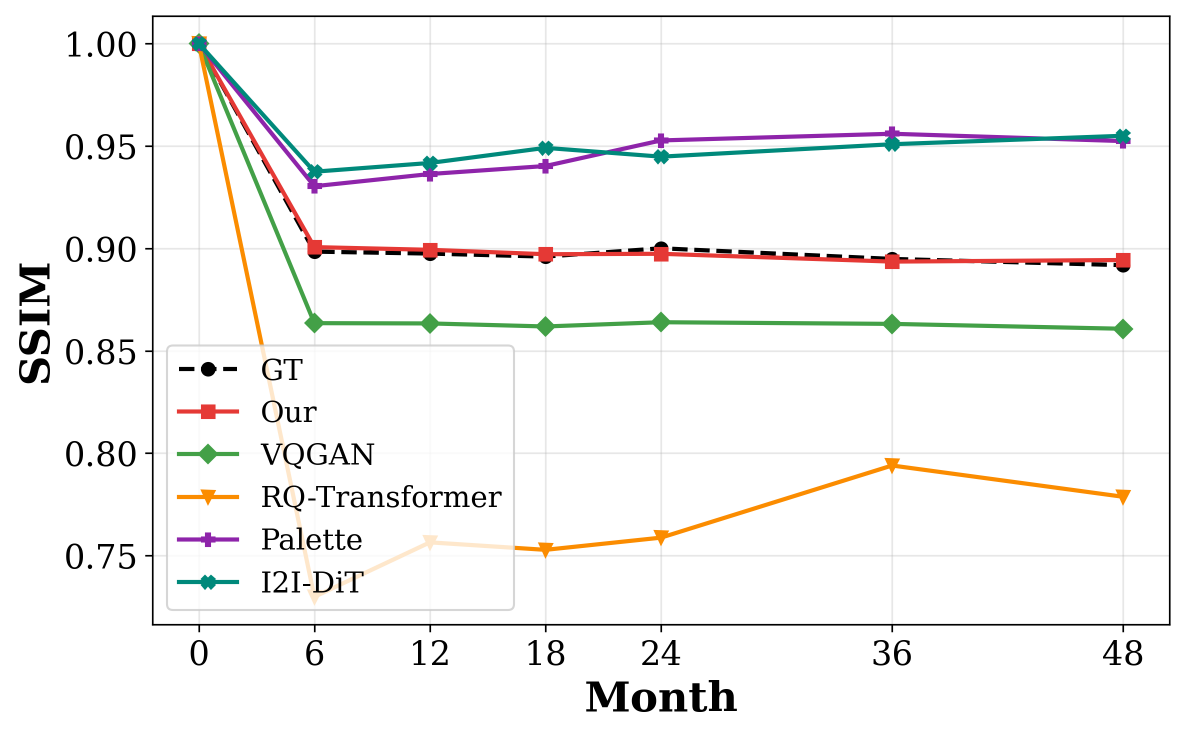}
\captionof{figure}{Personalized Longitudinal Progression Trajectory.}
\label{fig:trajectory}
\end{minipage}
\vspace{2mm}

\noindent This suggests the model avoids the Identity Collapse seen in competing methods. The high SSIM of the Palette and I2I-DiT trajectories indicates that these models settle into a near-identity mapping, mostly reproducing stationary anatomy instead of the sparse pathological signal $\delta$.

In contrast, the VQGAN~\cite{esser2021taming} and RQ-Transformer~\cite{lee2022autoregressive} baselines drift and fluctuate, showing that standard continuous and autoregressive models struggle to stay anatomically consistent under low-signal conditions. Our framework remains stable over the 48-month horizon, which is consistent with the role of the FSQ bottleneck in preventing the spread of spurious variance across the volume. By recovering the personalized atrophy trajectory, Latent Drift provides the structural consistency needed for clinical forecasting.

\paragraph{Consistency Across Fast- and Slow-Changing Regions}
A natural concern with a globally uniform FSQ grid is that anatomical regions with very different rates of change might be quantized inconsistently. Because FSQ is applied in the learned latent space rather than in pixel space, the encoder maps anatomically heterogeneous regions onto a comparable scale before the dead-zone is applied. Fig.~\ref{fig:roi} reports the recovered-to-ideal change ratio on three regions of interest that span roughly a $23\times$ range of dynamic change (hippocampus, cerebellum, and ventricle). Our method stays within $[0.93, 1.18]$ of the ideal ratio across all three regions, with a mean absolute deviation of $0.12$, which is $2.6$ to $5.8\times$ closer to the ideal than the competing baselines. This indicates that the shared grid does not systematically under-represent fast-changing regions in practice, though a region-aware quantization scheme remains an open direction.

\begin{figure}[!htbp]
    \centering
    \includegraphics[width=\linewidth]{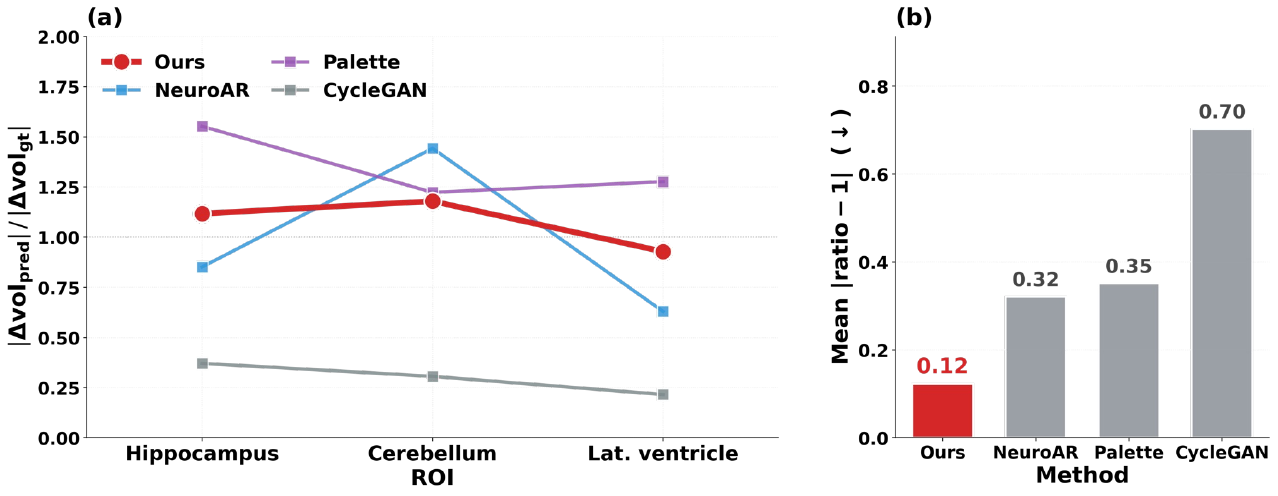}
    \caption{Recovered-to-ideal change ratio across three regions of interest spanning a wide range of dynamic change. Our method stays close to the ideal ratio of 1 across regions, indicating that the shared latent-space quantization grid handles anatomically heterogeneous regions consistently.}
    \label{fig:roi}
\end{figure}

\subsection{Ablation Studies}

\noindent 
\begin{minipage}[t]{0.50\linewidth} 
\vspace{0pt} 
    \paragraph{Which prediction target is better?}
    Table~\ref{tab:target} supports moving from pixel-level synthesis to latent drift prediction, which addresses the Identity Collapse from our analysis. When the model 
\end{minipage}
\hfill 
\begin{minipage}[t]{0.48\linewidth} 
\vspace{0pt}\vspace{\dimexpr-\baselineskip+0.3em\relax} 
\centering
    \captionof{table}{Ablation of Prediction Targets.}
    \label{tab:target}
    \vspace{1mm} 
    {\renewcommand{\arraystretch}{1.0}
    \resizebox{\textwidth}{!}{
        \begin{tabular}{c|c|c|c|c}
        \whline
        \textbf{Target} & FID  & Diff-SSIM & Acc.  & $F_1$  \\ \whline
        Pixel           & 44.38 & 0.7625    & 77.19 & 72.75 \\ \hline
        \rowcolor[HTML]{EFEFEF} 
        Drift (ours)    & 13.92 & 0.8205    & 88.33 & 87.52 \\ \whline
        \end{tabular}
    }}
\end{minipage}
\vspace{2mm}

\noindent performs direct pixel-level forecasting, it reaches a higher FID of 44.38 and a lower Diff-SSIM of 0.7625. This indicates that the optimization is dominated by reproducing stationary neuroanatomy rather than capturing faint biological progression. Shifting the target to the latent temporal residual $\Delta z$ concentrates model capacity on the progression-relevant dynamics.

\noindent 
\begin{minipage}[t]{0.48\linewidth}
\vspace{0pt}
    \paragraph{How does FSQ perform comparing with other quantization methods?}
  The ablation results in Table~\ref{tab:quan} support the choice of Finite Scalar Quantization as the topological dead-zone filter for neuro-forecasting. While Vector Quantization~(VQ) and Lookup-Free
\end{minipage}
\hfill 
\begin{minipage}[t]{0.48\linewidth} 
\vspace{0pt}
\vspace{-5mm}
\captionof{table}{Ablation of Quantization Mechanisms}
\centering
\renewcommand{\arraystretch}{1.0}{
\resizebox{\textwidth}{!}{
\begin{tabular}{c|c|c|c|c}
\whline
\textbf{Quantizers} & rFID  & Diff-SSIM & $Acc.$  & $F_1$  \\ \whline
\rowcolor[HTML]{EFEFEF} 
FSQ (ours)          & 13.77 & 0.8224    & 88.24 & 87.41 \\ \hline
VectorQuan.         & 14.04 & 0.8249    & 86.67 & 85.52 \\ \hline
Residual VQ         & 29.06 & 0.7798    & 84.56 & 83.67 \\ \hline
LookupFreeQuan.     & 19.09 & 0.8253    & 85.09 & 84.02 \\ \hline
BSQ                 & 20.70 & 0.7795    & 84.56 & 82.75 \\ \whline
\end{tabular}
}}
\label{tab:quan}
\end{minipage}%

\noindent Quantization (LFQ) achieve slightly higher Diff-SSIM scores of 0.8249 and 0.8253 respectively, FSQ provides the best overall generative fidelity with an rFID of 13.77. Binary Spherical Quantization (BSQ) reaches an rFID of 20.70 and an $F_1$ of 82.75, also below FSQ. This suggests that while standard VQ variants can capture structural changes, FSQ better preserves the semantic compressed representation required for high-fidelity MRI synthesis.

\paragraph{Which FSQ size achieves the best trade-off between generation and reconstruction?}

\begin{table*}[h]
\vspace{-7mm}
\centering
\caption{Ablation of FSQ grid configurations across reconstruction (left) and generative forecasting (right). The configuration $[8, 8, 8, 5, 5, 5]$ (highlighted) achieves the optimal trade-off between representational capacity and noise suppression, leading to superior clinical diagnostic performance.}
\vspace{-5mm}
\begin{minipage}{.49\textwidth}
\centering
\renewcommand{\arraystretch}{1.0}
\subcaption{Reconstruction Performance (Stage 1)}
\vspace{-3mm}
\resizebox{\textwidth}{!}{
\begin{tabular}{c|c|c|c|c}
\whline
\textbf{FSQ Size} & rFID  & Diff-SSIM & Acc.  & $F_1$  \\ \whline
{[}8,6,5{]}       & 6.93  & 0.8319    & 86.49 & 84.90 \\ \hline
{[}8,5,5,5{]}     & 14.95 & 0.8528    & 82.89 & 80.04 \\ \hline
{[}7,5,5,5,5{]}   & 6.36  & 0.8520    & 86.67 & 85.19 \\ \hline
{[}8,8,8,6,5{]}   & 24.24 & 0.8579    & 82.54 & 79.46 \\ \hline
\rowcolor[HTML]{EFEFEF} 
{[}8,8,8,5,5,5{]} & 9.45  & 0.8536    & 87.98 & 87.01 \\ \whline
\end{tabular}
}
\end{minipage}
\hfill
\begin{minipage}{.49\textwidth}
\centering
\subcaption{Generative Forecasting (Stage 2)}
\vspace{-3mm}
\renewcommand{\arraystretch}{1.0}
\resizebox{\textwidth}{!}{
\begin{tabular}{c|c|c|c|c}
\whline
\textbf{FSQ Size} & rFID  & Diff-SSIM & Acc.  & $F_1$  \\ \whline
{[}8,6,5{]}       & 7.07  & 0.8128    & 86.84 & 85.27 \\ \hline
{[}8,5,5,5{]}     & 15.58 & 0.8240    & 82.46 & 79.51 \\ \hline
{[}7,5,5,5,5{]}   & 7.00  & 0.8253    & 85.35 & 83.64 \\ \hline
{[}8,8,8,6,5{]}   & 27.07 & 0.8273    & 82.81 & 80.00 \\ \hline
\rowcolor[HTML]{EFEFEF} 
{[}8,8,8,5,5,5{]} & 12.78 & 0.8224    & 88.24 & 87.41 \\ \whline
\end{tabular}
}
\end{minipage}

\label{tab:fsq_size_ablation}
\vspace{-3mm}
\end{table*}

Table~\ref{tab:fsq_size_ablation} shows a trade-off: the FSQ step size must suppress high-frequency noise while preserving sparse biological drift. Coarse configurations (e.g., $[8, 6, 5]$) discard subtle indicators, while high-capacity grids (e.g., $[8, 8, 8, 6, 5]$) degrade stability (rFID 27.07) by interpolating imaging noise, which is the Continuous Interpolation Trap.

Configuration $[8, 8, 8, 5, 5, 5]$ balances expressive power with signal isolation through a non-Lipschitz threshold, reaching the best clinical metrics (Accuracy 88.24\%, $F_1$ 87.41\%). Other configurations yield higher structural similarity but lose diagnostic reliability by capturing non-semantic fluctuations. Calibrated quantization therefore matters for neuro-forecasting as more than a compression step.

\noindent 
\begin{minipage}[t]{0.48\linewidth} 
\vspace{0pt} 
    \paragraph{Which prediction target is better?}
  Table~\ref{tab:genmethod} compares generation paradigms and shows that a high-capacity, conditioned sequence model is needed to forecast the discrete progression signal. While the ContextCon. baseline reaches a competitive rFID of 14.49,
\end{minipage}
\hfill 
\begin{minipage}[t]{0.48\linewidth} 
\vspace{0pt}\vspace{\dimexpr-\baselineskip+0.3em\relax} 
\captionof{table}{Comparison of Generation Paradigms.}
\renewcommand{\arraystretch}{1.0}{
\resizebox{\textwidth}{!}{
\begin{tabular}{c|c|c|c|c}
\whline
\textbf{Generator} & rFID  & Diff-SSIM & Acc.  & $F_1$  \\ \whline
ContextCon.        & 14.49 & 0.7954    & 83.77 & 81.92 \\ \hline
MaskedCon.         & 23.17 & 0.8169    & 88.07 & 87.41 \\ \hline
\rowcolor[HTML]{EFEFEF}
Ours               & 13.92 & 0.8205    & 88.33 & 87.52 \\ \whline
\end{tabular}
}}
\label{tab:genmethod}
\end{minipage}
\vspace{2mm}

\noindent its relatively low Diff-SSIM of 0.7954 and Accuracy of 83.77\% suggest that simple conditioning on baseline anatomy is insufficient to capture the sparse pathological progression. Conversely, the MaskedCon. variant improves structural capturing, yet it suffers from a significant degradation in generative fidelity. This trade-off reflects the difficulty of maintaining global anatomical consistency while localized pathological changes are being synthesized.

%% file: Tables/Tab_Main.tex
\begin{table}[t]
\caption{\textbf{Quantitative comparison of neuro-forecasting performance on the longitudinal 3D brain MRI dataset.} We evaluate models across three dimensions: (i) \textbf{Generative Fidelity} via FID (i3d and classifier-based), (ii) \textbf{Structural Similarity} between forecasted and ground-truth progression, and (iii) \textbf{Downstream Clinical Utility} via AD diagnosis performance. Bold and underlined values indicate the best and second-best results, respectively.}
\label{tab:main}

\renewcommand{\arraystretch}{1.15}{
\resizebox{\textwidth}{!}{

\begin{tabular}{cccccccccc}
\whline
\multicolumn{1}{c|}{}                                                           & \multicolumn{2}{c|}{\textbf{Generative Fidelity}}                   & \multicolumn{3}{c|}{\textbf{Structural Similarity}}                                                              & \multicolumn{4}{c}{\textbf{Downstream Clinical Utility}}                                                                                                            \\ \cline{2-10} 
\multicolumn{1}{c|}{\multirow{-2}{*}{\textbf{Method}}}                          & FID (i3d)      & \multicolumn{1}{c|}{FID (cls.)}                    & Diff-SSIM       & Pix-SSIM        & \multicolumn{1}{c|}{NCC}                                                 & $ \mathbf{F_1}$                                              & $Pre.$           & $Rec.$           & \textbf{Acc.}  \\ \whline
\multicolumn{1}{c|}{\textit{CycleGAN}\cite{zhu2017unpaired}}                              & \textbf{12.13} & \multicolumn{1}{c|}{\textbf{7.03}}                 & 0.7713          & \textbf{0.9612} & \multicolumn{1}{c|}{0.9611}                                              & 75.44                                                        & 85.95          & 67.22          & 79.39    \\ \hline
\multicolumn{10}{c}{Diffusion based solutions}                                                                                                                                                                                                                                                                                                                                                                                                                    \\ \hline
\multicolumn{1}{c|}{\textit{Palette}\cite{saharia2022palette}}                               & 46.74          & \multicolumn{1}{c|}{9.27}                          & {0.8030}    & 0.8088          & \multicolumn{1}{c|}{{0.9810}}                                        & {80.36}                                                  & {\ul 87.50}    & {74.30}    & 82.89          \\ \hline
\multicolumn{1}{c|}{\textit{I2I-DiT}\cite{peebles2023scalable}}                               & 25.65          & \multicolumn{1}{c|}{{\ul 7.63}}                    & 0.7799          & 0.7979          & \multicolumn{1}{c|}{0.9807}                                              & \multicolumn{1}{c|}{62.37}                                   & 87.00          & 48.60          & 72.36          \\ \hline
\multicolumn{1}{c|}{\textit{BrLP}\cite{puglisi2024enhancing}}                               & 19.59          & \multicolumn{1}{c|}{{10.14}}                    & 0.8063          & 0.8574          & \multicolumn{1}{c|}{{\ul 0.9819}}                                              & \multicolumn{1}{c|}{80.36}                                   & 86.98          & 74.67         & 82.81         \\ \hline

\multicolumn{10}{c}{Auto-regressive based solutions}                                                                                                                                                                                                                                                                                                                                                                                                              \\ \hline
\multicolumn{1}{c|}{\textit{VQGAN}\cite{esser2021taming}}                                 & 30.11          & \multicolumn{1}{c|}{18.29}                         & 0.7625          & 0.8612          & \multicolumn{1}{c|}{0.9576}                                              & \multicolumn{1}{c|}{70.55}                                   & 78.23          & 64.24          & 74.74          \\ \hline
\multicolumn{1}{c|}{\textit{RQ-Transformer}\cite{lee2022autoregressive}}                        & 27.71          & \multicolumn{1}{c|}{10.89}                         & 0.7071          & 0.7507          & \multicolumn{1}{c|}{0.8666}                                              & \multicolumn{1}{c|}{60.78}                                   & 62.82          & 58.84          & 64.21          \\ \hline
\multicolumn{1}{c|}{\textit{NeuroAR}\cite{Yesiloglu2025NeuralAM}}                        &  22.28         & \multicolumn{1}{c|}{15.34}                         & {\ul 0.8128}          & 0.8554          & \multicolumn{1}{c|}{0.9782}                                              & \multicolumn{1}{c|}{{\ul 81.86}}                                   & {\ul 87.50}          & {\ul 76.91}          & {\ul 83.95}          \\ \hline
\rowcolor[HTML]{EFEFEF} 
\multicolumn{1}{c|}{\cellcolor[HTML]{EFEFEF}\textit{\textbf{Ours}}} & {\ul 13.92}    & \multicolumn{1}{c|}{\cellcolor[HTML]{EFEFEF}12.77} & \textbf{0.8204} & {\ul 0.9483}    & \multicolumn{1}{c|}{\cellcolor[HTML]{EFEFEF}\textbf{0.9880}} & \multicolumn{1}{c|}{\cellcolor[HTML]{EFEFEF}\textbf{87.51}} & \textbf{88.26} & \textbf{86.78} & \textbf{88.33} \\ \whline
\end{tabular}
}}

\end{table}

%% file: Sections/Related_Work.tex
\noindent \textbf{Static brain MRI generation.} The first generation of work established that deep generative models can produce anatomically realistic brain volumes. 3D GANs combining variational autoencoders with adversarial training were shown to synthesize whole-brain MRIs that capture the statistical distribution of healthy and pathological anatomy~\cite{Kwon2019GenerationO3, Zoghby2024GenerativeAN}. Conditional diffusion probabilistic models subsequently improved both fidelity and diversity, generating brain images whose distributional properties closely match those of real clinical scans~\cite{Pinaya2022BrainIG}. In parallel, cross-modality translation networks demonstrated that generative models can learn the highly nonlinear mapping between MRI contrasts~\cite{Yang2020MRICI, Sharma2019MissingMP}. These results show that modern generative architectures have enough representational capacity to model 3D neuroanatomy. However, all of the above operate on single time points: they produce a plausible brain image but encode no notion of how that anatomy evolves over time.

\noindent \textbf{Conditioned brain aging and disease simulation.} A second line of work introduces temporal and pathological conditioning to steer generation. Age-conditioned GANs learn to transform an input scan into what the same brain might look like at a different age by disentangling identity-preserving content from age- and sex-dependent style~\cite{Gadewar2023PredictingIB}. CounterSynth extends this idea to counterfactual reasoning: a conditional model of diffeomorphic deformations modifies select morphological features of a brain image to reflect a specified demographic or diagnostic label while leaving subject-specific details intact~\cite{Pombo2021EquitableMO}. Diffusion-based counterfactual generation further enables interpretable detection of Alzheimer's disease effects by comparing an individual's real scan with a model-generated healthy counterfactual of matched age and sex~\cite{Dhinagar2024CounterfactualMG}. Most recently, InBrainSyn combines cohort-level template learning with parallel transport to simulate individualized aging trajectories under both healthy and disease conditions from a single observation~\cite{Fu2025SynthesizingIA}. These methods demonstrate that generative models can be meaningfully conditioned on clinically relevant variables. 

\noindent \textbf{Individualized longitudinal prediction.} TimeFlow conditions a U-Net on continuous age variables under symmetric diffeomorphic constraints, enabling nonlinear extrapolation of individual atrophy trajectories~\cite{Jian2025TimeFlowTC}. Deformation-aware diffusion models incorporate explicit morphological priors into the generation process to better capture the characteristic atrophy patterns of Alzheimer's disease~\cite{Honga2025DeformationawareTG}. An alternative strategy bypasses the deformation assumption and directly generates future voxel intensities: conditional GANs with 3D discriminators have been trained to synthesize whole-brain MRIs at a target future time point~\cite{Jung2022ConditionalGW}. The autoregressive NeuroAR framework tokenizes 3D brain volumes and predicts future token maps via cross-attention on acquisition and target ages, outperforming both GAN and latent-diffusion baselines across multiple longitudinal cohorts~\cite{Yesiloglu2025NeuralAM}. Brain Latent Progression further operates in a compressed latent space to model individualized spatiotemporal disease trajectories via conditional latent diffusion~\cite{Puglisi2025BrainLP}.

%% file: Sections/Conclusion.tex
This work identifies and addresses two optimization pathologies, Identity Collapse and the Continuous Interpolation Trap, that cause standard generative models to fail when forecasting slow-evolving neurodegeneration. We introduce Latent Drift, a progressive framework that shifts the predictive target from absolute pixel-level anatomy to a compressed temporal residual. By applying Finite Scalar Quantization as a non-Lipschitz topological dead-zone filter, our approach suppresses dense nuisance noise while preserving the sparse biological progression. Experiments on longitudinal 3D brain MRIs show that Latent Drift improves forecasting fidelity and clinical diagnostic accuracy over diffusion and autoregressive baselines, providing a patient-specific forecasting tool that can support clinical trial design and earlier intervention. \\
\noindent\textbf{Limitations and Future Work.} Our shared dead-zone quantization grid assumes a stable background, so it can under-represent regions that change quickly within a clinical interval, such as the ventricles, while still filtering subtle change elsewhere. We also validate only on well-registered, downsampled ADNI and AIBL cohorts, leaving open how to separate subtle progression from large site-specific deformation (e.g., gradient nonlinearity) in raw multi-site data. Future work will address both through a region-aware quantization grid and validation on raw multi-site scans, and will extend the framework to other slow-evolving pathologies and modalities such as chest X-ray and retinal imaging.

%% file: Sections/proof_draft.tex
\subsection{Limits of Lipschitz Generative Models Under Dense Nuisance Noise}

\subsubsection{Problem Setup}

Let $(\mathcal X, \|\cdot\|_2)$ be a compact metric space and let 
$\Delta z \in \mathbb R^d$ denotes the latent temporal drift.
Assume the additive generative process
\begin{equation}
\Delta z = \delta(x) + \eta,
\end{equation}
where:

\begin{itemize}
    \item $\delta : \mathcal X \to \mathbb R^d$ is the true pathological progression signal, assumed to be $L_\delta$-Lipschitz continuous.
    There exists a support set $S(x) \subset \{1,\dots,d\}$ with
    $|S(x)| \le s \ll d$ such that
    \begin{equation}
        \delta_j(x) = 0 \quad \text{for all } j \notin S(x).
    \end{equation}
    
    Furthermore, on its support, the signal has high localized magnitude:
    \begin{equation}
        \min_{j \in S(x)} |\delta_j(x)| \ge \gamma > 0.
    \end{equation}

    \item $\eta \in \mathbb R^d$ is the sample-personalized nuisance noise drawn from a distribution $\mathcal{P}_\eta$ satisfying:
    \begin{enumerate}
        \item (Dense support) $\mathbb P(\eta_j \neq 0) = 1$ for all $j$,
        \item (Bounded continuous amplitude) $\|\eta\|_\infty \le \varepsilon$,
        \item (Zero mean) $\mathbb E[\eta] = 0$,
        \item (Independence) Independent of $x$ and across samples.
    \end{enumerate}
\end{itemize}

Let $\mathcal{D}_n = \{(x_i,y_i)\}_{i=1}^n$ be an i.i.d.\ dataset where $y_i = \delta(x_i) + \eta_i$.

We consider a hypothesis class of high-capacity continuous generative models:
\begin{equation}
    \mathcal F_L := 
\left\{
f:\mathcal X \to \mathbb R^d \;\middle|\;
\|f(x) - f(x')\|_2 \le L \|x-x'\|_2
\right\}.
\end{equation}

The model $f_\theta \in \mathcal F_L$ is trained via empirical risk minimization (ERM) on $\mathcal{D}_n$. We analyze the expected population signal error:
\begin{equation}
\mathcal R_\delta(f)
=
\mathbb E_{x \sim \mathcal{P_X}} \|f(x) - \delta(x)\|_2.
\end{equation}

\bigskip

\begin{theorem}[Irreducible Signal Error and Loss of Sparsity]
\label{thm:lipschitz_failure}
Assume $f_\theta \in \mathcal F_L$ achieves zero empirical risk (i.e., it interpolates the noisy training data, $f_\theta(x_i) = y_i$ for all $i \in \{1 \dots n\}$). Let $\rho(x) = \min_{x_i \in \mathcal{D}_n} \|x - x_i\|_2$ be the distance from a test point to its nearest training neighbor. Then:

\begin{enumerate}
    \item (Generalization Lower Bound)
    \begin{equation}
    \mathcal R_\delta(f_\theta)
    \ge
    \mathbb E\|\eta\|_2
    -
    (L + L_\delta) \mathbb E[\rho(x)].
    \end{equation}
    In particular, as $n \to \infty$ with a fixed Lipschitz constant $L$, $\mathbb E[\rho(x)] \to 0$, forcing the error on the true signal to be strictly lower-bounded by the noise amplitude.

    \item (Spurious Continuous Variance)
    For any test point $x_{\mathrm{test}}$ satisfying $\rho(x_{\mathrm{test}}) < \frac{|\eta_{i,j}|}{L}$, the model perfectly hallucinates the noise:
    \begin{equation}
        f_{\theta, j}(x_{\mathrm{test}}) \neq 0.
    \end{equation}
    
    Consequently, $f_\theta(x_{\mathrm{test}})$ is a dense vector on any bounded neighborhood of the training manifold, destroying the $s$-sparsity of $\delta$.
\end{enumerate}
\end{theorem}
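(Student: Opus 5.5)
The whole argument rests on one nearest-neighbour decomposition. For a test point $x$, let $i(x)$ be the index of a nearest training point, so $\|x-x_{i(x)}\|_2=\rho(x)$; ties are broken by a fixed measurable rule. Interpolation gives $f_\theta(x_{i(x)})=\delta(x_{i(x)})+\eta_{i(x)}$, and therefore
\begin{equation*}
f_\theta(x)-\delta(x) = \eta_{i(x)} + \bigl(f_\theta(x)-f_\theta(x_{i(x)})\bigr) + \bigl(\delta(x_{i(x)})-\delta(x)\bigr).
\end{equation*}
The last two terms are bounded in $\ell_2$ by $L\rho(x)$ and $L_\delta\rho(x)$, since $f_\theta\in\mathcal F_L$ and $\delta$ is $L_\delta$-Lipschitz. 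The reverse triangle inequality then gives, pointwise,
\begin{equation*}
\|f_\theta(x)-\delta(x)\|_2 \ge \|\eta_{i(x)}\|_2-(L+L_\delta)\rho(x).
\end{equation*}

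For part 1, I take expectations over the test point and the training sample. The step needing care is $\mathbb E\|\eta_{i(x)}\|_2=\mathbb E\|\eta\|_2$. The index $i(x)$ is a measurable function of $(x,x_1,\dots,x_n)$ alone. By the independence assumption, the noise vectors $\eta_1,\dots,\eta_n$ are i.i.d.\ with law $\mathcal P_\eta$ and independent of $(x,x_1,\dots,x_n)$. Conditioning on $(x,x_1,\dots,x_n)$ therefore leaves $\eta_{i(x)}\sim\mathcal P_\eta$, and the identity follows; this yields the stated bound. For the limiting claim, I use compactness of $\mathcal X$. For any $r>0$, cover $\mathcal X$ by finitely many balls of radius $r$. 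The probability that some ball of positive $\mathcal P_X$-mass contains no sample tends to $0$. Hence $\rho(x)\le 2r$ with probability tending to one, and $\rho$ is bounded by $\operatorname{diam}\mathcal X$, so bounded convergence gives $\mathbb E[\rho(x)]\to 0$. Since $L$ is fixed, the subtracted term vanishes and the error is asymptotically at least $\mathbb E\|\eta\|_2>0$, which is positive by dense support.

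For part 2, I work coordinatewise. The basic fact is $|f_{\theta,j}(x)-f_{\theta,j}(x')|\le\|f_\theta(x)-f_\theta(x')\|_2\le L\|x-x'\|_2$. Take $x_{\mathrm{test}}$ with nearest training point $x_i$ and fix $j\notin S(x_i)$. Then $f_{\theta,j}(x_i)=\eta_{i,j}$, so
\begin{equation*}
|f_{\theta,j}(x_{\mathrm{test}})|\ge|\eta_{i,j}|-L\rho(x_{\mathrm{test}})>0
\end{equation*}
under the hypothesis $\rho(x_{\mathrm{test}})<|\eta_{i,j}|/L$. Because $\mathbb P(\eta_{i,j}\neq0)=1$ for every $j$, almost surely $\min_j|\eta_{i,j}|>0$. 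So the neighbourhood of radius $\min_j|\eta_{i,j}|/L$ around each training point is a set on which all off-support coordinates are nonzero. That is at least $d-s$ nonzero entries, far more than $s$, which already destroys $s$-sparsity.

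On-support coordinates need a separate argument. If $\varepsilon<\gamma$, then $|f_{\theta,j}(x_i)|\ge\gamma-\varepsilon$ and the same argument applies with margin $\gamma-\varepsilon$. Without that condition I would only claim genericity: $\delta_j(x_i)+\eta_{i,j}=0$ has probability zero when $\eta_{i,j}$ has no atom at $-\delta_j(x_i)$. I would add a mild no-atom assumption if needed. Taking the union of these neighbourhoods over the training points gives density on a neighbourhood of the training manifold.

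I expect the main obstacle to be the independence step in part 1, namely justifying $\mathbb E\|\eta_{i(x)}\|_2=\mathbb E\|\eta\|_2$ when the selected index is random. The second obstacle is making the phrase ``$f_\theta(x_{\mathrm{test}})$ is dense'' precise, since the per-coordinate radius $|\eta_{i,j}|/L$ varies with $j$. I would resolve both as above: condition on the design points, and take the minimum over $j$ of the radii.
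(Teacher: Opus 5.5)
Your proposal is correct and is essentially the paper's proof. Part~1 uses the same nearest-neighbour triangle inequality with the Lipschitz bounds on $f_\theta$ and $\delta$, and part~2 uses the same coordinatewise bound $|f_{\theta,j}(x_{\mathrm{test}})|\ge|\eta_{i,j}|-L\rho(x_{\mathrm{test}})$. Your additions only make explicit what the paper leaves implicit: taking $j\notin S(x_i)$, which is what the argument actually uses where the paper writes $j\notin S(x_{\mathrm{test}})$ and then tacitly needs $\delta_j(x_i)=0$; the independence justification of $\mathbb E\|\eta_{i(x)}\|_2=\mathbb E\|\eta\|_2$; the compactness proof of $\mathbb E[\rho(x)]\to 0$; and a uniform radius via $\min_j$.

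One caveat is shared with the paper. Your independence step assumes the interpolation hypothesis holds for every draw of the data, but for $n\ge 2$ an $L$-Lipschitz interpolant fails to exist with positive probability, since two nearby design points carry independent noise. A non-vacuous reading must therefore condition on interpolability, and under that conditioning $\eta_{i(x)}$ is no longer automatically independent of the design.
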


\begin{proof}

\noindent\textbf{Part I: Lower Bound on Signal Error.}

Let $x \sim \mathcal{P_X}$ be an unseen test point, and let $x_{\pi(x)} \in \mathcal{D}_n$ be its nearest neighbor in the training set. By the triangle inequality:
\begin{equation}
    \|f_\theta(x_{\pi(x)}) - \delta(x_{\pi(x)})\|_2 
\le 
\|f_\theta(x_{\pi(x)}) - f_\theta(x)\|_2 
+ \|f_\theta(x) - \delta(x)\|_2 
+ \|\delta(x) - \delta(x_{\pi(x)})\|_2.
\end{equation}

Rearranging to isolate the term of interest:
\begin{equation}
    \|f_\theta(x) - \delta(x)\|_2 
\ge 
\|f_\theta(x_{\pi(x)}) - \delta(x_{\pi(x)})\|_2 
- \|f_\theta(x_{\pi(x)}) - f_\theta(x)\|_2 
- \|\delta(x) - \delta(x_{\pi(x)})\|_2.
\end{equation}

Because $f_\theta$ interpolates the training set, we have exactly $f_\theta(x_{\pi(x)}) = \delta(x_{\pi(x)}) + \eta_{\pi(x)}$. Therefore, the first term simplifies to $\|\eta_{\pi(x)}\|_2$. 

Applying the Lipschitz assumptions for $f_\theta$ and $\delta$, we bound the remaining terms:
\begin{equation}
    \|f_\theta(x) - \delta(x)\|_2 
\ge 
\|\eta_{\pi(x)}\|_2 
- L \|x - x_{\pi(x)}\|_2 
- L_\delta \|x - x_{\pi(x)}\|_2.
\end{equation}

Taking the expectation over $x$ and the noise distribution yields:
\begin{equation}
    \mathcal R_\delta(f_\theta) 
\ge 
\mathbb E\|\eta\|_2 - (L + L_\delta) \mathbb E[\rho(x)].
\end{equation}

This demonstrates that interpolating dense noise injects a permanent generalization bias proportional to the noise norm, which cannot be resolved by increasing dataset size $n$.

\noindent\textbf{Part II: Destruction of Sparsity.}

Fix a coordinate $j \notin S(x_{\mathrm{test}})$. By definition, the true signal is zero: $\delta_j(x_{\mathrm{test}}) = 0$. 

Let $x_i$ be the nearest training point to $x_{\mathrm{test}}$. On the training data, interpolation forces:
\begin{equation}
    f_{\theta,j}(x_i) = \delta_j(x_i) + \eta_{i,j}.
\end{equation}

Even if $\delta_j(x_i) = 0$, $f_{\theta,j}(x_i) = \eta_{i,j}$, which is non-zero with probability 1 (Assumption 1).

Because $f_\theta$ is $L$-Lipschitz continuous:
\begin{equation}
    |f_{\theta,j}(x_{\mathrm{test}}) - f_{\theta,j}(x_i)|
\le
L \|x_{\mathrm{test}} - x_i\|_2.
\end{equation}

Using the reverse triangle inequality, the predicted magnitude at the test point is strictly bounded away from zero:
\begin{equation}
    |f_{\theta,j}(x_{\mathrm{test}})|
\ge
|\eta_{i,j}|
-
L \|x_{\mathrm{test}} - x_i\|_2.
\end{equation}

Therefore, for $f_{\theta,j}(x_{\mathrm{test}})$ to be exactly zero, it strictly requires $\|x_{\mathrm{test}} - x_i\| \ge \frac{|\eta_{i,j}|}{L}$. For any test point falling within this local radius, the continuous model is mathematically forced to output a spurious non-zero value. Because $\eta_i$ is dense across all $d$ coordinates, the output $f_\theta(x_{\mathrm{test}})$ is correspondingly dense.

\end{proof}

\bigskip

\begin{corollary}[Necessity of Non-Lipschitz Projection]

Assume the noise amplitude is strictly bounded below the signal magnitude: $\varepsilon < \gamma$.
Define the coordinate-wise non-linear dead-zone operator $\mathcal Q: \mathbb{R}^d \to \mathbb{R}^d$ as:
\begin{equation}
    \mathcal Q(z)_j
=
\begin{cases}
z_j, & |z_j| > \varepsilon, \\
0, & |z_j| \le \varepsilon.
\end{cases}
\end{equation}

Because $|\eta_j| \le \varepsilon$ and $|\delta_j + \eta_j| \ge \gamma - \varepsilon > \varepsilon$ on the true support, applying this operator to the empirical targets yields perfect isolation of the biological signal:
\begin{equation}
    \mathcal Q(\delta(x) + \eta)
=
\delta(x)
\quad \text{for all } x.
\end{equation}

Crucially, $\mathcal Q$ introduces an infinite gradient at $|z_j| = \varepsilon$ and is strictly non-Lipschitz. 
Consequently, Theorem \ref{thm:lipschitz_failure} implies that exact recovery of the $s$-sparse biological progression $\delta$ from noisy targets $y$ is impossible using standard ERM over continuous hypothesis classes $\mathcal F_L$. It fundamentally requires projecting the learning objective through a non-Lipschitz operator to break the continuous interpolation trap.
\end{corollary}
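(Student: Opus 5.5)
The plan follows the three assertions of the corollary in order: (i) the exact identity $\mathcal Q(\delta(x)+\eta)=\delta(x)$ under $\varepsilon<\gamma$, (ii) the non-Lipschitz character of $\mathcal Q$, and (iii) the impossibility claim, obtained by combining (ii) with Theorem~\ref{thm:lipschitz_failure}. Everything is coordinatewise, so I fix $x$, a realization of $\eta$ with $\|\eta\|_\infty\le\varepsilon$, and a coordinate $j$. I then split into the two cases $j\notin S(x)$ and $j\in S(x)$, exactly as the statement's justification ("Because $|\eta_j|\le\varepsilon$ and $|\delta_j+\eta_j|\ge\gamma-\varepsilon>\varepsilon$ \dots") suggests.

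\emph{Off the support.} For $j\notin S(x)$ we have $\delta_j(x)=0$, so $z_j=\eta_j$ with $|z_j|\le\varepsilon$. The second branch of $\mathcal Q$ then gives $\mathcal Q(z)_j=0=\delta_j(x)$, with no further assumption needed.

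\emph{On the support.} For $j\in S(x)$, the reverse triangle inequality gives $|z_j|\ge|\delta_j(x)|-|\eta_j|\ge\gamma-\varepsilon$, so the first branch applies provided $\gamma-\varepsilon>\varepsilon$. This is the inequality the statement invokes, and I would cite the margin assumption in that form. Then $\mathcal Q(z)_j=z_j=\delta_j(x)+\eta_j$. To conclude $\mathcal Q(z)_j=\delta_j(x)$, as the displayed identity asserts, I still need the residual $\eta_j$ to vanish on $S(x)$.

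\textbf{This on-support step is the main obstacle.} The statement's one-line reasoning only controls which branch fires, not the value returned. I would first try to show $\eta_j=0$ on $S(x)$ from the model assumptions. Failing that, I would prove the identity in the precise sense the argument supports: $\mathcal Q(\delta+\eta)$ and $\delta$ agree exactly on every nuisance-only coordinate, and they fire on exactly the same coordinate set. I would state explicitly that this is the content carried by "perfect isolation." Two additional points must be recorded there: the first-branch condition requires $\gamma-\varepsilon>\varepsilon$, and on $S(x)$ the agreement is up to the bounded residual $\eta_j$.

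\emph{Non-Lipschitz property.} This is a direct computation. Take $z_j=\varepsilon$ and $z_j'=\varepsilon+t$ with $t\downarrow0$. Then $|\mathcal Q(z')_j-\mathcal Q(z)_j|=\varepsilon+t$ while $|z'_j-z_j|=t$. The difference quotient $(\varepsilon+t)/t$ is unbounded, so no Lipschitz constant exists, and $\mathcal Q$ has a jump of height $\varepsilon$ at $|z_j|=\varepsilon$. This is what the statement calls an "infinite gradient."

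\emph{Impossibility claim.} I would invoke Theorem~\ref{thm:lipschitz_failure}, Part~II. Any interpolating $f_\theta\in\mathcal F_L$ satisfies
\[
|f_{\theta,j}(x_{\mathrm{test}})|\ge|\eta_{i,j}|-L\rho(x_{\mathrm{test}})>0
\]
on a neighborhood of the training points, for every $j$, because $\mathbb P(\eta_j\neq0)=1$. Hence $f_\theta$ is dense there and cannot equal the $s$-sparse $\delta$. Part~I adds that $\mathcal R_\delta(f_\theta)\ge\mathbb E\|\eta\|_2-(L+L_\delta)\mathbb E[\rho]$, which stays bounded away from zero as $n\to\infty$.

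Finally, I contrast this with the first part of the corollary. Composing the targets with $\mathcal Q$ removes the noise on all off-support coordinates, and this is possible only because $\mathcal Q\notin\mathcal F_L$-type maps. Since every element of $\mathcal F_L$ is Lipschitz, the separation must come from projecting the learning objective through a non-Lipschitz operator, which is the asserted necessity.
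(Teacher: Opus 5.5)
Your case split, the jump computation at $|z_j|=\varepsilon$, and the appeal to Part~II of Theorem~\ref{thm:lipschitz_failure} are the paper's own argument, which is nothing more than the inline justification in the statement. The two defects you flag are genuine defects of the statement that the paper glosses over, so state them firmly rather than hedging. First, your ``first try'' to show $\eta_j=0$ on $S(x)$ is ruled out by the dense-support assumption $\mathbb P(\eta_j\neq 0)=1$. The displayed identity $\mathcal Q(\delta(x)+\eta)=\delta(x)$ therefore fails almost surely on $S(x)$, and your fallback (exact agreement off $S(x)$, identical firing set) is the real content. Second, $\varepsilon<\gamma$ does not give $\gamma-\varepsilon>\varepsilon$. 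Take $\gamma=3\varepsilon/2$, $\delta_j=\gamma$, $\eta_j=-\varepsilon$: then $z_j=\varepsilon/2$, which $\mathcal Q$ annihilates. Support preservation needs $\gamma>2\varepsilon$, and this must be added as a hypothesis, not ``cited''.

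The genuine gap is your final step. The claim that off-support annihilation is possible ``only because'' $\mathcal Q$ is non-Lipschitz does not follow from Theorem~\ref{thm:lipschitz_failure}, and it is false. The soft threshold $T(z)_j=\mathrm{sign}(z_j)\max(|z_j|-\varepsilon,0)$ is $1$-Lipschitz, vanishes on $[-\varepsilon,\varepsilon]$, and is nonzero whenever $|z_j|>\varepsilon$. So under $\gamma>2\varepsilon$ it gives $T(\delta(x)+\eta)_j=0$ for $j\notin S(x)$ and $\mathrm{supp}\,T(\delta(x)+\eta)=S(x)$, the same guarantee $\mathcal Q$ delivers. What does the work is the flat dead zone, not the discontinuity.

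Theorem~\ref{thm:lipschitz_failure} constrains interpolating predictors fitted to the raw targets. It says nothing about which target-side projections can denoise, and nothing outside the interpolation regime either. Since $\eta$ is zero-mean and independent of $x$, the population squared risk equals $\mathbb E\|f(x)-\delta(x)\|_2^2+\mathbb E\|\eta\|_2^2$, which is minimized over $\mathcal F_L$ by $\delta$ itself once $L\ge L_\delta$. So ``impossible under standard ERM'' is not implied.

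What you can prove is the contrast: an interpolating $f_\theta\in\mathcal F_L$ trained on raw $y$ is dense near the data, whereas a dead-zone projection of the targets, Lipschitz or not, restores $S(x)$ when $\gamma>2\varepsilon$. The necessity claim has to be dropped. It cannot be rescued by the paper's later proposition that no Lipschitz map restores support, because $T$ contradicts that proposition too.
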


\subsection{Formal Analysis of FSQ as a Support-Restoring Projection}

\subsubsection{Problem Setup}

Let $(\mathcal{X}, \|\cdot\|_2)$ be a compact metric space. 
For each $x \in \mathcal{X}$, the latent temporal drift satisfies

\begin{equation}
\Delta z(x) = \delta(x) + \eta,
\end{equation}

where:

\begin{itemize}
    \item $\delta : \mathcal{X} \to \mathbb{R}^d$ is the true pathological drift,
    \item $\eta \in \mathbb{R}^d$ is nuisance noise independent of $x$.
\end{itemize}

We assume:

\begin{enumerate}
    \item \textbf{Sparsity:} There exists a support set $S(x) \subset \{1,\dots,d\}$ 
    with $|S(x)| \le s \ll d$ such that
    \begin{equation}
        \delta_j(x) = 0 \quad \forall j \notin S(x).
    \end{equation}

    \item \textbf{Signal Strength:}
    \begin{equation}
        \min_{j \in S(x)} |\delta_j(x)| \ge \gamma > 0.
    \end{equation}

    \item \textbf{Dense Noise:}
    \begin{equation}
        \mathbb{P}(\eta_j \neq 0) = 1 \quad \forall j,
    \end{equation}
    and
    \begin{equation}
        \|\eta\|_\infty \le \varepsilon.
    \end{equation}

    \item \textbf{Separation Condition:}
    \begin{equation}
        0 < \varepsilon < \gamma.
    \end{equation}

\end{enumerate}

Define the scalar quantizer with step size $h > 0$:

\begin{equation}
Q_h(z)_j = h \cdot \left\lfloor \frac{z_j}{h} \right\rceil.
\end{equation}

We assume calibration:

\begin{equation}
h > 2\varepsilon.
\end{equation}

---

\subsubsection{Main Theorem}

\begin{theorem}[Exact Support Recovery via Quantization]
\label{thm:fsq_support_recovery}
Under the assumptions above, define
\begin{equation}
    \Delta z_q(x) := Q_h(\delta(x) + \eta).
\end{equation}
Then for all $x \in \mathcal{X}$:
\begin{enumerate}
    \item (Annihilation outside support)
    \begin{equation}
        \Delta z_{q,j}(x) = 0
    \quad \forall j \notin S(x).
    \end{equation}

    \item (Support preservation)
    \begin{equation}
        \mathrm{supp}(\Delta z_q(x)) = S(x).
    \end{equation}

    \item (Sparsity restoration)
    \begin{equation}
        \|\Delta z_q(x)\|_0 \le s.
    \end{equation}
    
\end{enumerate}
\end{theorem}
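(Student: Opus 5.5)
The statement is coordinatewise, so I would fix $x\in\mathcal X$ and a realization of $\eta$ with $\|\eta\|_\infty\le\varepsilon$, and study the scalar map $t\mapsto h\lfloor t/h\rceil$ one coordinate at a time. The only fact about rounding I need is the dead-zone characterization
\[
h\lfloor t/h\rceil=0 \iff |t|<h/2,
\]
up to a tie convention at $|t|=h/2$. I would fix the convention that ties round away from zero, so that $|t|\ge h/2$ gives a nonzero output. The whole argument then reduces to deciding, for each coordinate, which side of the threshold $h/2$ the value $|\delta_j(x)+\eta_j|$ lies on.

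\textbf{Part 1 (annihilation).} Take $j\notin S(x)$. Then $\delta_j(x)=0$, so $\Delta z_j=\eta_j$ and $|\eta_j|\le\varepsilon<h/2$ by the calibration $h>2\varepsilon$. Hence $\Delta z_{q,j}(x)=0$. This holds deterministically for every realization of $\eta$, so the dense-support assumption $\mathbb P(\eta_j\neq0)=1$ plays no role here.

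\textbf{Part 3 (sparsity).} This follows immediately from Part 1: $\mathrm{supp}(\Delta z_q(x))\subseteq S(x)$, so $\|\Delta z_q(x)\|_0\le|S(x)|\le s$. Parts 1 and 3 need only $h>2\varepsilon$.

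\textbf{Part 2 (support preservation).} I also need the reverse inclusion: every $j\in S(x)$ must survive quantization. By the reverse triangle inequality,
\[
|\delta_j(x)+\eta_j|\ge\gamma-\varepsilon,
\]
so it suffices that $\gamma-\varepsilon\ge h/2$, i.e. $h\le 2(\gamma-\varepsilon)$.

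This is the main obstacle. The listed hypotheses $\varepsilon<\gamma$ and $h>2\varepsilon$ do \emph{not} imply $h\le 2(\gamma-\varepsilon)$. For example, $\gamma=1.1\varepsilon$ and $h=3\varepsilon$ satisfy both, yet $|\delta_j+\eta_j|$ can be as small as $0.1\varepsilon<h/2$, so coordinate $j$ rounds to zero. Part 2 as literally stated is therefore false without an extra upper calibration.

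I would handle this by making the two-sided calibration explicit:
\[
2\varepsilon<h\le 2(\gamma-\varepsilon).
\]
This window is nonempty exactly when $\gamma>2\varepsilon$, a strengthened $\beta$-min condition. It is the natural analogue of the corollary's dead-zone operator $\mathcal Q$, where the threshold $\varepsilon$ sits strictly between the noise level and $\gamma-\varepsilon$. Under this calibration, every $j\in S(x)$ has $|\Delta z_j|\ge h/2$, so $\Delta z_{q,j}(x)\neq0$. Combined with Part 1, this gives $\mathrm{supp}(\Delta z_q(x))=S(x)$.

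Because no step uses continuity of $\delta$ or any property of $x$ beyond the existence of $S(x)$, all three conclusions hold uniformly in $x$. I would also note that the clip at $\pm L_{max}$ in the actual FSQ layer does not affect any of this, since clipping never changes whether a coordinate is zero.
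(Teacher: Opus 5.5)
Your proof follows the paper's own coordinatewise dead-zone argument (Step 1 off the support, Step 2 on the support, Step 3 for sparsity), and your objection to Part 2 is correct. The paper's Step 2 claims $\gamma-\varepsilon>h/2$ ``because $h>2\varepsilon$ (by choosing $h<2\gamma$ if desired)'', but a lower bound on $h$ cannot give an upper bound on $h/2$. Even the parenthetical $h<2\gamma$ does not suffice: take $\gamma=1.1\varepsilon$, $h=2.1\varepsilon$, $\delta_j=\gamma$ and $\eta_j<-0.05\varepsilon$, which gives $0<\Delta z_j<h/2$ and hence $\Delta z_{q,j}=0$.

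Your explicit window $2\varepsilon<h\le 2(\gamma-\varepsilon)$, which forces the stronger margin $\gamma>2\varepsilon$, is exactly the missing hypothesis. Deriving Part 3 from Part 1 alone, rather than from Steps 1 and 2 as the paper does, also correctly keeps the sparsity bound valid under $h>2\varepsilon$ only. If you make the upper calibration strict, $h<2(\gamma-\varepsilon)$, you can drop the tie-breaking convention. That is worth doing, since the common round-half-to-even rule would send $\Delta z_j=h/2$ to $0$.
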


---

\subsubsection{Proof}

Fix $x \in \mathcal{X}$.

\paragraph{Step 1: Coordinates outside the true support.}

For any $j \notin S(x)$, we have $\delta_j(x) = 0$, hence

\begin{equation}
    \Delta z_j(x) = \eta_j.
\end{equation}

By boundedness of noise,

\begin{equation}
    |\Delta z_j(x)| = |\eta_j| \le \varepsilon.
\end{equation}

Because $h > 2\varepsilon$, we have

\begin{equation}
    |\eta_j| < \frac{h}{2}.
\end{equation}

Therefore rounding maps the coordinate to zero:

\begin{equation}
    Q_h(\Delta z_j(x)) = 0.
\end{equation}

This proves annihilation outside the support.

---

\paragraph{Step 2: Coordinates on the true support.}

Let $j \in S(x)$. Then

\begin{equation}
    \Delta z_j(x) = \delta_j(x) + \eta_j.
\end{equation}

Using triangle inequality,

\begin{equation}
    |\Delta z_j(x)|
\ge |\delta_j(x)| - |\eta_j|
\ge \gamma - \varepsilon.
\end{equation}

Since $\varepsilon < \gamma$, we have $\gamma - \varepsilon > 0$.

Because $h > 2\varepsilon$, we have

\begin{equation}
    \gamma - \varepsilon > \frac{h}{2}
\quad \text{(by choosing $h < 2\gamma$ if desired)}.
\end{equation}

Thus $|\Delta z_j(x)| \ge h/2$, meaning it cannot fall into the zero bin.

Therefore

\begin{equation}
    Q_h(\Delta z_j(x)) \neq 0.
\end{equation}

This proves support preservation.

---

\paragraph{Step 3: Sparsity restoration.}

From Steps 1 and 2,

\begin{equation}
    \mathrm{supp}(\Delta z_q(x)) = S(x).
\end{equation}

Thus

\begin{equation}
    \|\Delta z_q(x)\|_0 = |S(x)| \le s.
\end{equation}

\hfill $\square$

---

\subsubsection{Necessity of Non-Lipschitz Projection}

We now formalize why this mechanism cannot be achieved by any Lipschitz map.

\begin{proposition}[No Lipschitz Map Can Exactly Restore Support]
Let $F : \mathbb{R}^d \to \mathbb{R}^d$ be globally $L$-Lipschitz. 
Assume $\eta$ is dense with $\mathbb{P}(\eta_j \neq 0)=1$.
Then no such $F$ can satisfy

\begin{equation}
    F(\delta(x)+\eta)_j = 0 \quad \forall j \notin S(x),
\end{equation}

for all realizations of $\eta$.
\end{proposition}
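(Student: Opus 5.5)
The plan is to argue coordinatewise. First I pin down what ``restore support'' has to mean, because as literally worded the condition $F(\delta(x)+\eta)_j=0$ for $j\notin S(x)$ is met by the Lipschitz map $F\equiv 0$, and also by soft-thresholding at level $\varepsilon$. So I will prove the proposition under the reading that matches Theorem~\ref{thm:fsq_support_recovery}. There, $F$ must annihilate off-support coordinates \emph{and} keep on-support coordinates detectably non-zero. ``Detectably'' means bounded away from zero by a fixed amount $\kappa>0$ that does not depend on the margin $\gamma-\varepsilon$, just as $Q_h$ outputs $|\Delta z_{q,j}|\ge h$ on $S(x)$.

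The statement to prove is then: no globally $L$-Lipschitz $F$ satisfies both of the following, uniformly over all admissible $(\delta,\eta)$ with $0<\varepsilon<\gamma$:
\begin{equation}
F(\delta(x)+\eta)_j=0 \ \ (j\notin S(x)), \qquad |F(\delta(x)+\eta)_j|\ge\kappa \ \ (j\in S(x)).
\end{equation}

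\textbf{Step 1: a zero set.} Fix a coordinate $j$ and define $Z_j=\{z: z=\delta(x)+\eta,\ j\notin S(x)\}$. Dense noise with $\|\eta\|_\infty\le\varepsilon$, together with the freedom in the off-support pattern, means that $Z_j$ contains points whose $j$-th entry is any value in $(-\varepsilon,\varepsilon)$, with the other entries chosen freely among admissible values. On $Z_j$ we have $F_j=0$. Since $F_j$ is $L$-Lipschitz, this gives
\begin{equation}
|F_j(z)|\le L\,\mathrm{dist}(z,Z_j)\quad\text{for every } z.
\end{equation}

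\textbf{Step 2: a close on-support point.} Choose an admissible on-support configuration with $\delta_j=\gamma$ and $\eta_j=-\varepsilon+\tau$ for small $\tau>0$. Its $j$-th entry is $\gamma-\varepsilon+\tau$. Copy all other coordinates into a point of $Z_j$ whose $j$-th entry is $\varepsilon-\tau'$. This is allowed because $\delta_j=0$ off support and the noise is dense, so the other coordinates can be matched exactly. The two points then differ only in coordinate $j$, and their distance is at most $|\gamma-2\varepsilon|+\tau+\tau'$.

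\textbf{Step 3: the contradiction.} Combining Steps 1 and 2 gives $|F_j|\le L(|\gamma-2\varepsilon|+\tau+\tau')$ at an on-support point. Letting $\gamma\downarrow 2\varepsilon$, which is admissible since it still satisfies $\varepsilon<\gamma$, and then $\tau,\tau'\to 0$, this bound goes to $0$. That contradicts $|F_j|\ge\kappa$ for any fixed $\kappa$ and any finite $L$. For comparison, $Q_h$ keeps $|Q_h(z)_j|\ge h$ right up to the bin boundary by jumping there. This is exactly the non-Lipschitz behaviour, and it mirrors Part~II of Theorem~\ref{thm:lipschitz_failure} with the roles of zero and non-zero swapped.

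\textbf{Main obstacle.} The hard part is Step 2: making sure the off-support and on-support configurations can be placed at distance controlled by $|\gamma-2\varepsilon|$ alone. This requires that the other $d-1$ coordinates can be matched exactly. I would handle it by fixing a single admissible $\delta$ on the remaining coordinates and exploiting the independence of $\eta$ across coordinates. If the setup forbids varying $\gamma$, I would instead state the result quantitatively: any Lipschitz support-restorer has on-support output at most $L(\gamma-2\varepsilon)^+ + 2L\varepsilon$ near the threshold. This is the precise sense in which a Lipschitz map cannot reproduce the dead-zone gap that FSQ provides.
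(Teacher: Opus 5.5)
You are right that the proposition, read literally, is false. $F\equiv 0$ is Lipschitz and satisfies the displayed condition. Even if one adds $F_j\neq 0$ on $S(x)$, the $1$-Lipschitz soft-threshold $F(z)_j=\mathrm{sign}(z_j)\,(|z_j|-\varepsilon)_+$ works whenever $\gamma>2\varepsilon$.

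The paper's proof does not avoid this. Its decisive step claims that a Lipschitz $F$ vanishing on the off-support interval $[-\varepsilon,\varepsilon]$ must be ``locally constant on a full neighborhood'' and so must also kill signal coordinates. That claim is false, and soft-thresholding is exactly the counterexample.

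So you are not reconstructing the paper's argument; you are replacing a qualitative and invalid argument with a quantitative one. Your bound $|F_j(z)|\le L\,\mathrm{dist}(z,Z_j)$, evaluated at the on-support input nearest the threshold, proves a corrected statement, and that argument is sound. Take a two-point $\delta$ ($\gamma e_j$ versus $0$) and noise whose support reaches $\pm\varepsilon$ independently in each coordinate. The two inputs then differ only in coordinate $j$, by $|\gamma-2\varepsilon+\tau+\tau'|$, and sending $\gamma\downarrow 2\varepsilon$ defeats any fixed $\kappa$.

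Three things need tightening.

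\textbf{First,} quantify over $\gamma>2\varepsilon$, not $\gamma>\varepsilon$. For $\varepsilon<\gamma<2\varepsilon$, your own two-point construction lets an on-support input coincide exactly with an off-support one. Then no map at all (including $Q_h$) meets your two conditions, so the claim in that range says nothing about Lipschitzness. The threshold $2\varepsilon$ is also what the paper's FSQ theorem really needs: its Step~2 requires $\varepsilon<h/2<\gamma-\varepsilon$, which is not implied by $\varepsilon<\gamma$.

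\textbf{Second,} a fixed $Q_h$ with $h>2\varepsilon$ also fails your uniform-in-$\gamma$ requirement, because on-support entries with $|z_j|<h/2$ round to $0$. Your motivating comparison with FSQ is therefore off as stated. The clean contrast is per instance. For each $\gamma>2\varepsilon$, every $L$-Lipschitz support-restorer has $\min_{j\in S(x)}|F_j|\le L(\gamma-2\varepsilon)$, and this is sharp: $L$ times the soft-threshold attains it. By contrast, $Q_h$ with $2\varepsilon<h<2(\gamma-\varepsilon)$ gives $|Q_h(z)_j|\ge h>2\varepsilon$ on the support. This is the correct form of your fallback bound. The extra $2L\varepsilon$ term there is unnecessary and hides that the bound vanishes at the threshold.

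\textbf{Third,} state explicitly that your result is worst-case over $\delta$ and $\mathcal P_\eta$. For any fixed instance with $\gamma>2\varepsilon$, a Lipschitz map does restore the support. What you have proved is a trade-off between the Lipschitz constant and the margin, not the outright impossibility the proposition asserts.
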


\begin{proof}

Suppose such $F$ exists.

Fix $j \notin S(x)$.
For training points, $\delta_j(x)=0$, so the input varies over the interval $[-\varepsilon,\varepsilon]$ due to $\eta_j$.

Because $F$ is Lipschitz, its restriction to this interval is continuous.

If $F$ maps the entire interval to zero, then its derivative must vanish on that interval. 
But Lipschitz continuity implies small perturbations in neighboring coordinates propagate continuously. 

Since $\eta$ is dense across all coordinates, arbitrarily small perturbations in input produce small but non-zero perturbations in output.

Therefore exact annihilation over an open interval contradicts global Lipschitz continuity unless $F$ is locally constant on a full neighborhood, which would also annihilate signal coordinates.

Thus no globally Lipschitz map can exactly restore sparse support under dense perturbations.
\end{proof}

\subsubsection{Conclusion}

The quantizer $Q_h$ introduces a discontinuity at $|z_j| = h/2$, making it non-Lipschitz. 
This discontinuity is precisely what enables exact support restoration.

Therefore:

\begin{itemize}
    \item Continuous predictors inevitably smear dense noise (Theorem 3).
    \item Exact sparse recovery requires a non-Lipschitz projection.
    \item FSQ provides this projection via a calibrated dead-zone.
\end{itemize}

This formally justifies the necessity of quantized latent drift for generative forecasting under dense nuisance perturbations.

%% file: Sections/Experiments_Appd.tex
\subsection{Benchmark Details}
\subsubsection{Data Preprocessing}
All images undergo standard preprocessing procedures, including bias field correction, skull stripping, and spatial normalization to the MNI152 template space\cite{fonov2011unbiased}.
The images are resampled to an isotropic resolution of 1 mm and organized into volumes of size $182 \times 218 \times 182$ for subsequent analysis.
The intensities are then converted to Hounsfield Units (HU) using the slope and intercept values obtained from the metadata and clipped to the range [-1000, 1000] HU, corresponding to the practical limits of the HU scale~\cite{denotter2019hounsfield,lamba2014ct}. Finally, the volumes are downsampled to a spatial size of $93 \times 112 \times 96$.

\subsubsection{Data Statistics}
Based on the available follow-up information, we select scan pairs with a time horizon $\Delta t$ no greater than 48 months.
A total of 1,195 patients are selected, forming 3,981 current-future scan pairs.
Among them, 342 patients with 1,140 scan pairs are used as the test set, while the remaining data are used for training and validation.

\subsection{Quantitative evaluation}

We evaluate the generated 3D MRIs along three dimensions:
\begin{itemize}
    \item[-] \textbf{Structural Similarity} measures spatial agreement with the real volumes. Pix-SSIM is the SSIM between the generated future MRI $\hat{x}_{fut}$ and the ground-truth (GT) $x_{fut}$, while Diff-SSIM is the SSIM on the difference maps relative to the current MRI $x_{cur}$, which isolates anatomical change over time. We also report Normalized Cross-Correlation (NCC) on the central sub-volume of $x \in \mathbb{R}^{H \times W \times D}$ with depth $D=40$, which focuses on the main brain regions and avoids boundary slices with little anatomical content.

    \item[-] \textbf{Downstream Clinical Utility} measures diagnostic value. We train an Alzheimer's Disease (AD) classifier on our preprocessed data, reaching over 91\% accuracy with a transformer architecture similar to ViViT\cite{arnab2021vivit}, then apply it to the MRIs generated by each method.

    \item[-] \textbf{Generative Fidelity} measures distribution-level realism via the Fréchet Inception Distance (FID). FID(i3d) uses features from a standard I3D model, and FID(cls.) uses features from the AD classifier above for a more domain-relevant comparison.
\end{itemize}
\subsection{Implementation Details} 
Training is conducted on $8 \times$ NVIDIA H200 GPUs using bf16 mixed-precision.
The Latent Drift Tokenizer is optimized with Adam ($\beta=(0.5,0.9)$), batch size 8, learning rate $1\times10^{-4}$ with cosine decay, and is trained for 920 epochs (80,040 steps) with 2 warmup epochs.
The Generator is optimized with AdamW ($\beta=(0.9,0.95)$), batch size 2, learning rate $3\times10^{-4}$ with cosine decay, and is trained for 70 epochs (24,360 steps) with 6 warmup epochs.
During inference, we use a sampling temperature of 1.2.

\subsection{Pseudo Code}
We give pseudo-code for both stages. Algorithm 1 covers the Latent Drift Tokenizer optimization, and Algorithm 2 the training and inference of the autoregressive generator.
                                                                
\begin{figure}[H]
\begin{minipage}[t]{0.48\textwidth}
\begin{algorithm}[H]
\caption{Latent Drift Tokenizer (Stage 1)}
\label{alg:tokenizer}
\begin{algorithmic}[1]
\STATE \texttt{{\color{codegreen}\# T: tokenizer (PatchEmbed + (2+1)D Transformer)}}
\STATE \texttt{{\color{codegreen}\# Q: FSQ quantizer}}
\STATE \texttt{{\color{codegreen}\# T\_inv: de-tokenizer (cross-attn decoder)}}
\STATE \texttt{{\color{codegreen}\# D: discriminator}}
\STATE
\STATE \texttt{for (x\_cur, x\_fut) in loader:}
\STATE \quad \texttt{{\color{codegreen}\# tokenize paired scans}}
\STATE \quad \texttt{p\_cur = PatchEmbed(x\_cur)}
\STATE \quad \texttt{p\_fut = PatchEmbed(x\_fut)}
\STATE \quad \texttt{z = SpatialAttn(cat(p\_cur, p\_fut))}
\STATE \quad \texttt{z = TemporalAttn(z)}
\STATE \quad \texttt{z\_cur, z\_fut = split(z)}
\STATE
\STATE \quad \texttt{{\color{codegreen}\# drifting module + quantization}}
\STATE \quad \texttt{dz = z\_fut - z\_cur}
\STATE \quad \texttt{dz\_q, idx = Q(dz)}
\STATE
\STATE \quad \texttt{{\color{codegreen}\# de-tokenize}}
\STATE \quad \texttt{x\_hat = T\_inv(z\_cur, dz\_q)}
\STATE
\STATE \quad \texttt{{\color{codegreen}\# generator update}}
\STATE \quad \texttt{L\_G = MSE(x\_fut, x\_hat)}
\STATE \quad \texttt{L\_G += LPIPS(x\_fut, x\_hat)}
\STATE \quad \texttt{L\_G += GAN\_loss(D(x\_hat))}
\STATE \quad \texttt{L\_G.backward()}
\STATE \quad \texttt{update(T, Q, T\_inv)}
\STATE
\STATE \quad \texttt{{\color{codegreen}\# discriminator update}}
\STATE \quad \texttt{L\_D = disc\_loss(D(x\_fut),}
\STATE \quad \texttt{\quad \quad D(x\_hat.detach()))}
\STATE \quad \texttt{L\_D.backward(); update(D)}
\end{algorithmic}
\end{algorithm}
\end{minipage}
\hfill
\begin{minipage}[t]{0.48\textwidth}
\begin{algorithm}[H]
\caption{Generator (Stage 2)}
\label{alg:generator}
\begin{algorithmic}[1]
\STATE \texttt{{\color{codegreen}\# G: auto-regressive transformer}}
\STATE \texttt{{\color{codegreen}\# T, Q, T\_inv: frozen from Stage 1}}
\STATE
\STATE \texttt{{\color{codegreen}\# --- Training ---}}
\STATE \texttt{for (x\_cur, x\_fut, c, dt) in loader:}
\STATE \quad \texttt{{\color{codegreen}\# ground-truth drift tokens}}
\STATE \quad \texttt{with no\_grad():}
\STATE \quad \quad \texttt{z\_cur, dz\_q\_gt = T\_Q(x\_cur, x\_fut)}
\STATE
\STATE \quad \texttt{{\color{codegreen}\# predict drift tokens}}
\STATE \quad \texttt{logits = G(z\_cur, c, dt)}
\STATE \quad \texttt{loss = CE(logits, dz\_q\_gt)}
\STATE \quad \texttt{loss.backward(); update(G)}
\STATE
\STATE \texttt{{\color{codegreen}\# --- Inference ---}}
\STATE \texttt{with no\_grad():}
\STATE \quad \texttt{{\color{codegreen}\# frozen tokenizer}}
\STATE \quad \texttt{z\_cur = T(x\_cur)}
\STATE
\STATE \quad \texttt{{\color{codegreen}\# sample drift tokens}}
\STATE \quad \texttt{logits = G(z\_cur, c, dt)}
\STATE \quad \texttt{dz\_q = sample(logits / tau)}
\STATE
\STATE \quad \texttt{{\color{codegreen}\# frozen de-tokenizer}}
\STATE \quad \texttt{x\_fut\_hat = T\_inv(z\_cur, dz\_q)}
\STATE \quad \texttt{return x\_fut\_hat}
\end{algorithmic}
\end{algorithm}
\end{minipage}
\end{figure}

%% file: Sections/Ablation_Appd.tex
In this section, we present the comprehensive results of our ablation studies, evaluating the performance variations across all metrics including generative fidelity, structural similarity, and downstream clinical utility.

\clearpage
\begingroup
\setlength{\intextsep}{6pt plus 30pt minus 4pt}

\begin{table}[H]
\caption{\textbf{Ablation of Prediction Targets}}
\label{tab:main}

\renewcommand{\arraystretch}{1.0}{
\resizebox{\textwidth}{!}{

\begin{tabular}{cccccccccc}
\whline
\multicolumn{1}{c|}{}                                                           & \multicolumn{2}{c|}{\textbf{Generative Fidelity}}                   & \multicolumn{3}{c|}{\textbf{Structural Similarity}}                                                              & \multicolumn{4}{c}{\textbf{Downstream Clinical Utility}}                                                                                                            \\ \cline{2-10} 
\multicolumn{1}{c|}{\multirow{-2}{*}{\textbf{Target}}}                          & FID (i3d)      & \multicolumn{1}{c|}{FID (cls.)}                    & Diff-SSIM       & Pix-SSIM        & \multicolumn{1}{c|}{NCC}                                                 & $ \mathbf{F_1}$                                              & $Pre.$           & $Rec.$           & \textbf{Acc.}  \\ \whline
\multicolumn{1}{c|}{Pixel}                        & 44.38          & \multicolumn{1}{c|}{7.04}                         & 0.7625          & 0.8730          & \multicolumn{1}{c|}{0.9576}                                              & \multicolumn{1}{c}{72.75}                                   & 83.21          & 64.62          & 77.19          \\ \hline
\rowcolor[HTML]{EFEFEF} 
\multicolumn{1}{c|}{\cellcolor[HTML]{EFEFEF} Drift (ours)} & {13.92}    & \multicolumn{1}{c|}{\cellcolor[HTML]{EFEFEF}12.77} & 0.8205 & { 0.9483}    & \multicolumn{1}{c|}{\cellcolor[HTML]{EFEFEF} 0.9880} & \multicolumn{1}{c}{\cellcolor[HTML]{EFEFEF}87.52} & 88.26 & 86.78 & 88.33 \\ \whline
\end{tabular}
}}

\end{table}
\begin{table}[H]
\caption{\textbf{Ablation of Quantization Mechanisms across Reconstruction}}
\label{tab:quant_methods}

\renewcommand{\arraystretch}{1.0}
\resizebox{\textwidth}{!}{
\begin{tabular}{c|cc|ccc|cccc}
\whline
\multirow{2}{*}{\textbf{Quantizers}} 
& \multicolumn{2}{c|}{\textbf{Generative Fidelity}} 
& \multicolumn{3}{c|}{\textbf{Structural Similarity}} 
& \multicolumn{4}{c}{\textbf{Downstream Clinical Utility}} \\ \cline{2-10}

& FID (i3d) & FID (cls.) 
& Diff-SSIM & Pix-SSIM & NCC 
& $\mathbf{F_1}$  & $Pre.$   & $Rec.$   & \textbf{Acc.} \\ \whline

\rowcolor[HTML]{EFEFEF}
FSQ (ours) 
& 11.11 & 9.46 
& 0.8536 & 0.9606 & 0.9925 
& 87.01 & 88.61 & 85.47 & 87.98 \\ \hline

VectorQuant. 
& 13.60 & 30.70 
& 0.8522 & 0.9606 & 0.9926 
& 85.52 & 87.52 & 83.61 & 86.67 \\ \hline

Residual VQ 
& 12.71 & 8.32 
& 0.8356 & 0.9434 & 0.9898 
& 84.87 & 86.77 & 83.05 & 86.05 \\ \hline

LookupFreeQuan. 
& 19.15 & 28.06 
& 0.8253 & 0.9523 & 0.9897 
& 84.02 & 84.82 & 83.24 & 85.09 \\ \whline

\end{tabular}
}
\end{table}
\begin{table}[H]
\caption{\textbf{Ablation of Quantization Mechanisms across Generative Forecasting}}
\label{tab:quant_methods}

\renewcommand{\arraystretch}{1.0}
\resizebox{\textwidth}{!}{
\begin{tabular}{c|cc|ccc|cccc}
\whline
\multirow{2}{*}{\textbf{Quantizers}} 
& \multicolumn{2}{c|}{\textbf{Generative Fidelity}} 
& \multicolumn{3}{c|}{\textbf{Structural Similarity}} 
& \multicolumn{4}{c}{\textbf{Downstream Clinical Utility}} \\ \cline{2-10}

& FID (i3d) & FID (cls.) 
& Diff-SSIM & Pix-SSIM & NCC 
& $\mathbf{F_1}$  & $Pre.$   & $Rec.$   & \textbf{Acc.} \\ \whline

\rowcolor[HTML]{EFEFEF}
FSQ (ours)
& 13.77 & 12.78
& 0.8205 & 0.9484 & 0.9897
& 87.51 & 88.26 & 86.78 & 88.33 \\ \hline

VectorQuant.
& 14.39 & 33.21
& 0.8252 & 0.9534 & 0.9901
& 84.89 & 87.23 & 82.68 & 86.14 \\ \hline

Residual VQ
& 290.67 & 75.75
& 0.7798 & 0.8716 & 0.9735
& 83.67 & 83.36 & 83.99 & 84.56 \\ \hline

LookupFreeQuan.
& 19.10 & 28.32
& 0.8254 & 0.9522 & 0.9897
& 83.91 & 84.79 & 83.05 & 85.00 \\ \whline

\end{tabular}
}
\end{table}
\begin{table}[H]
\caption{\textbf{Ablation of FSQ grid configurations across reconstruction}}
\label{tab:codebook_config}

\renewcommand{\arraystretch}{1.15}
\resizebox{\textwidth}{!}{
\begin{tabular}{c|cc|ccc|cccc}
\whline
\multirow{2}{*}{\textbf{FSQ Size}} 
& \multicolumn{2}{c|}{\textbf{Generative Fidelity}} 
& \multicolumn{3}{c|}{\textbf{Structural Similarity}} 
& \multicolumn{4}{c}{\textbf{Downstream Clinical Utility}} \\ \cline{2-10}

& FID (i3d) & FID (cls.) 
& Diff-SSIM & Pix-SSIM & NCC 
& $\mathbf{F_1}$  & $Pre.$   & $Rec.$   & \textbf{Acc.} \\ \whline

[8,6,5]
& 21.55 & 6.93
& 0.8319 & 0.9506 & 0.9916
& 84.90 & 89.65 & 80.63 & 86.49 \\ \hline

[8,5,5,5]
& 12.34 & 14.95
& 0.8528 & 0.9620 & 0.9924
& 80.04 & 88.86 & 72.81 & 82.89 \\ \hline

[7,5,5,5,5]
& 10.80 & 6.36
& 0.8520 & 0.9609 & 0.9921
& 85.19 & 89.37 & 81.38 & 86.67 \\ \hline

[8,8,8,6,5]
& 10.13 & 24.25
& 0.8579 & 0.9609 & 0.9924
& 79.46 & 89.12 & 71.69 & 82.54 \\ \hline

\rowcolor[HTML]{EFEFEF}
{[8,8,8,5,5,5]}
& 11.11 & 9.46
& 0.8536 & 0.9606 & 0.9925
& 87.01 & 88.61 & 85.47 & 87.98 \\ \whline

\end{tabular}
}
\end{table}
\begin{table}[H]
\caption{\textbf{Ablation of FSQ grid configurations across generative forecasting}}
\label{tab:codebook_config_forecasting}

\renewcommand{\arraystretch}{1.15}
\resizebox{\textwidth}{!}{
\begin{tabular}{c|ccccc|cccc}
\whline
\multirow{2}{*}{\textbf{FSQ Size}} 
& \multicolumn{5}{c|}{\textbf{Generative Fidelity \& Structural Similarity}} 
& \multicolumn{4}{c}{\textbf{Downstream Clinical Utility}} \\ \cline{2-10}

& FID (i3d) & FID (cls.) 
& Diff-SSIM & Pix-SSIM & NCC 
& $\mathbf{F_1}$  & $Pre.$   & $Rec.$   & \textbf{Acc.} \\ \whline

[8,6,5]
& 21.36 & 7.07
& 0.8128 & 0.9446 & 0.9894
& 85.27 & 90.23 & 80.82 & 86.84 \\ \hline

[8,5,5,5]
& 13.25 & 15.58
& 0.8241 & 0.9542 & 0.9894
& 79.51 & 88.38 & 72.25 & 82.46 \\ \hline

[7,5,5,5,5]
& 11.20 & 7.00
& 0.8254 & 0.9527 & 0.9891
& 83.64 & 88.22 & 79.52 & 85.35 \\ \hline

[8,8,8,6,5]
& 12.22 & 27.08
& 0.8273 & 0.9481 & 0.9880
& 80.00 & 88.49 & 73.00 & 82.81 \\ \hline

\rowcolor[HTML]{EFEFEF}
{[8,8,8,5,5,5]}
& 13.92 & 12.78
& 0.8225 & 0.9484 & 0.9897
& 87.41 & 88.24 & 86.59 & 88.25 \\ \whline

\end{tabular}
}
\end{table}
\endgroup
\pagebreak[4]

\begin{table}[!htbp]
\caption{\textbf{Ablation of Generator Architecture.}}
\label{tab:generator_architecture}

\renewcommand{\arraystretch}{1.15}
\resizebox{\textwidth}{!}{
\begin{tabular}{c|cc|ccc|cccc}
\whline
\multirow{2}{*}{\textbf{Generator}} 
& \multicolumn{2}{c|}{\textbf{Generative Fidelity}} 
& \multicolumn{3}{c|}{\textbf{Structural Similarity}} 
& \multicolumn{4}{c}{\textbf{Downstream Clinical Utility}} \\ \cline{2-10}

& FID (i3d) & FID (cls.) 
& Diff-SSIM & Pix-SSIM & NCC 
& $\mathbf{F_1}$  & $Pre.$   & $Rec.$   & \textbf{Acc.} \\ \whline

FSQ+DiT
& 18.45 & 21.26
& 0.7321 & 0.8300 & 0.9650
& 76.84 & 84.00 & 70.80 & 80.18 \\ \hline

\rowcolor[HTML]{EFEFEF}
Ours
& 13.92 & 12.78
& 0.8205 & 0.9484 & 0.9897
& 87.51 & 88.26 & 86.78 & 88.33 \\ \whline

\end{tabular}
}
\end{table}

We further examine the choice of generator family by replacing our autoregressive model with a diffusion transformer over the same quantized drift tokens (FSQ+DiT). As shown in Table~\ref{tab:generator_architecture}, FSQ+DiT degrades markedly across all dimensions (e.g., Diff-SSIM 0.7321, $F_1$ 76.84). The discrete dead-zone structure produced by FSQ breaks the smooth latent distribution that diffusion denoising relies on, so the autoregressive formulation is a more natural fit for modeling the quantized drift.

%% file: Sections/Vis_Appd.tex
Here we show more visualization of gradient-based signal differences between current and predicted future states in Fig.~\ref{fig:other_gradient}.

\begin{figure}[!thbp]
\centering

\begin{subfigure}{0.48\linewidth}
    \centering
    \includegraphics[width=\linewidth]{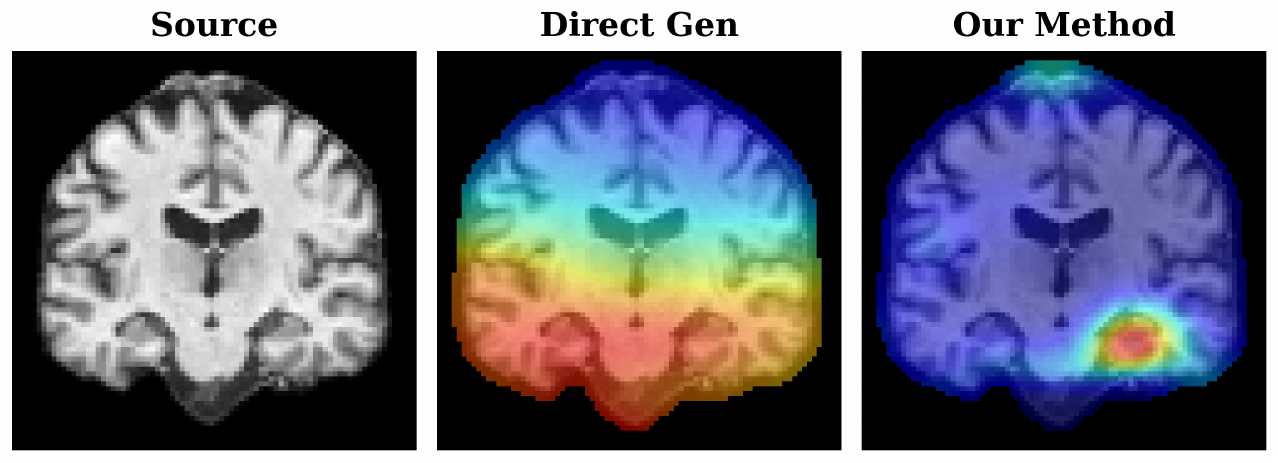}
    \caption{}
\end{subfigure}
\hfill
\begin{subfigure}{0.48\linewidth}
    \centering
    \includegraphics[width=\linewidth]{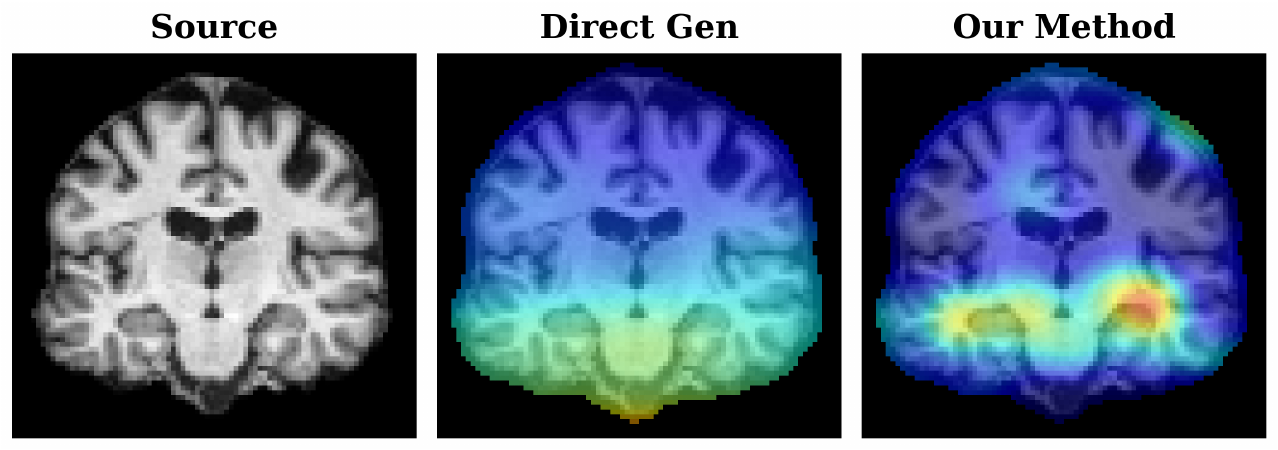}
    \caption{}
\end{subfigure}

\vspace{0.4cm}

\begin{subfigure}{0.48\linewidth}
    \centering
    \includegraphics[width=\linewidth]{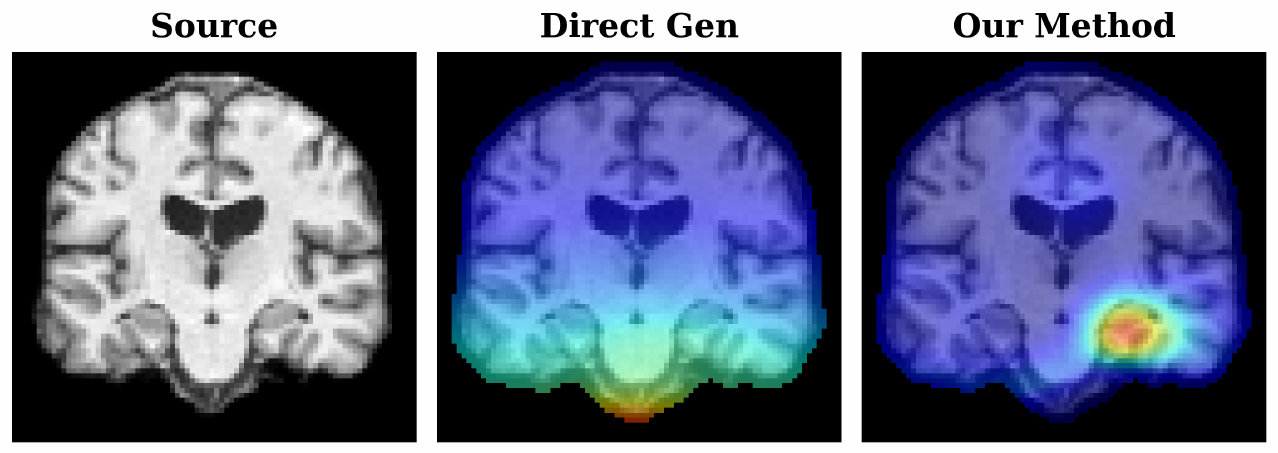}
    \caption{}
\end{subfigure}
\hfill
\begin{subfigure}{0.48\linewidth}
    \centering
    \includegraphics[width=\linewidth]{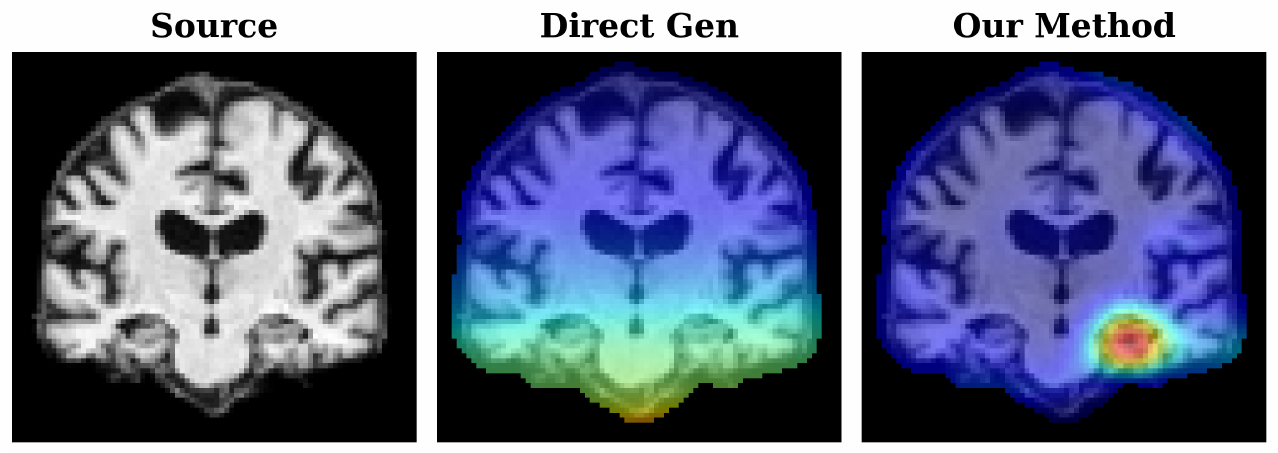}
    \caption{}
\end{subfigure}

\caption{Visualization of gradient-based signal differences}
\label{fig:other_gradient}
\end{figure}

\subsection{Qualitative Comparison}
Because progression over a typical clinical interval is extremely subtle, predictions from competing methods often look almost identical to the input scan, and differences are difficult to perceive in the raw volumes. Fig.~\ref{fig:qual_compare} compares our forecasts against baseline methods. Our method recovers the localized structural change that aligns with the ground-truth future scan, whereas the baselines largely reproduce the baseline anatomy. 

\begin{figure}[!htbp]
    \centering
    \includegraphics[width=\linewidth]{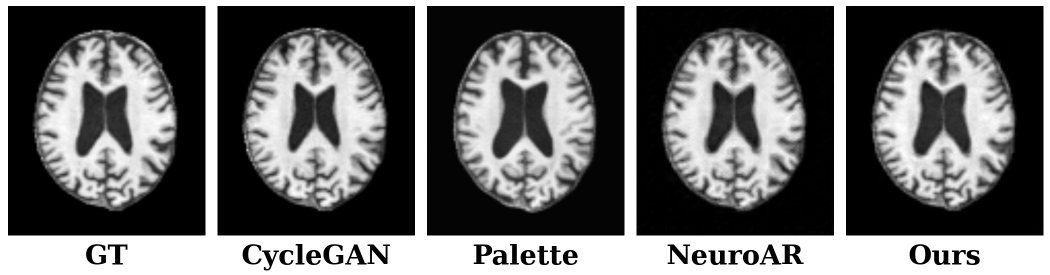}
    \caption{Qualitative comparison of forecasted future scans across methods. Our method captures localized progression consistent with the ground truth, while competing methods stay close to the input anatomy.}
    \label{fig:qual_compare}
\end{figure}